\pdfoutput=1
\documentclass[twocolumn]{autart}
\makeatletter
\@ifundefined{c@part}{\newcounter{part}}{}

\makeatother

\newif\ifanonymous
\anonymousfalse   

\usepackage{amsmath, amssymb, mathtools}

\usepackage{booktabs}
\usepackage{enumitem}

\usepackage{graphicx}
\usepackage{tikz}
\usetikzlibrary{shapes.geometric, arrows.meta, positioning, calc}

\usepackage{hyperref}

\usepackage{flushend}

\usepackage{xcolor}

\DeclareMathOperator*{\argmax}{arg\,max}

\renewcommand{\paragraph}[1]{\par\medskip\noindent{\bfseries #1}\enspace}

\renewcommand{\Elproofname}{Proof.}

\begin{document}

\begin{frontmatter}

\title{Runtime-Incremental Transformer for Reinforcement-Learning-Based
       Adaptive Control}

\ifanonymous
  \author{}
  \address{}
\else
  \author[amiens]{Giansalvo Cirrincione}\ead{exin@u-picardie.fr},
  \author[palermo]{Adriano Fagiolini\corauthref{cor1}}\ead{fagiolini@unipa.it}

  \corauth[cor1]{Corresponding author. Tel.\ +39 091 238 63613.
    This paper was not presented at any IFAC meeting.}

  \address[amiens]{Laboratoire LTI, Universit\'e de Picardie
    Jules Verne, Amiens, France}
  \address[palermo]{MIRPALab, Department of Engineering, University
    of Palermo, 90128 Palermo, Italy}
\fi

\begin{keyword}
Adaptive control \sep reinforcement learning \sep
transformer architectures \sep curriculum learning \sep
Stribeck friction \sep Euler--Lagrange systems
\end{keyword}

\begin{abstract}
Learning-based adaptive control of robotic manipulators with
non-observable friction memory has been addressed by attention-based
meta-controllers whose number of attention heads is fixed before
training and is tuned by costly offline search. At long memory
horizons, such fixed-capacity controllers are prone to catastrophic
failures on a sizeable fraction of training seeds. The present
paper introduces a runtime mechanism that grows and prunes the
heads of the attention block during reinforcement learning,
governed by two signals: the effective rank of the on-policy
context distribution, which triggers growth when representational
capacity becomes insufficient, and the per-head output magnitude,
which flags redundant heads for removal. Policy continuity at
growth events and a quantitative bound at prune events are
established analytically. On a two-link manipulator with Stribeck
friction, the proposed mechanism attains full success across all
memory regimes, eliminating the long-horizon failure mode and
removing the need for offline tuning of the head count.
\end{abstract}

\end{frontmatter}


\section{Introduction}
\label{sec:intro}

Robot manipulators are commonly controlled by computed-torque
laws whose gains are adapted online to compensate for
uncertainties in inertia and friction. The passivity-based
construction of Slotine and Li~\cite{SlotineLi1987} handles the
case in which the uncertainty is linear in an unknown parameter
vector; viscous and Coulomb friction fall into this case. Stribeck
and LuGre friction~\cite{Armstrong1994,deWit1995LuGre} do not: the
friction force now depends on a hidden internal state that decays
over a finite time, denoted $\tau_z$ and called the
\emph{memory horizon} of the plant. Since the hidden state cannot
be recovered from the instantaneous joint coordinates, the
closed-loop system is no longer Markovian in the measured state,
and classical adaptive arguments no longer apply directly.

A companion paper by the same
authors~\cite{CirrincioneFagiolini2026TemporalAttention}
addresses the non-Markovian regime by feeding a sliding window of
past measurements into a reinforcement-learning (RL) policy that
outputs the gains of the computed-torque law. The policy is a
single-layer transformer encoder; following the naming in the
companion paper, it is denoted INCRT-1L, a single-layer
instance of the Incremental Transformer (INCRT) family. The number
of attention heads, denoted $K$, is the architectural parameter
that controls the representational capacity of the encoder.
INCRT-1L is trained once per task, with $K$ selected before
training by an incremental rank-tracking
procedure~\cite{CirrincioneINCRT2026} applied to a surrogate
operator. The trained controller closes roughly $35\%$
to $40\%$ of the tracking error left by the computed-torque
baseline at short and medium memory horizons,
$\tau_z \in \{1, 2\}$~s. At the longer horizon $\tau_z = 5$~s, the
same procedure fails on four of ten random seeds; the trained
policies either diverge or collapse onto a payload-invariant
response whose tracking error exceeds that of the
uncompensated baseline. The failure is attributed
in~\cite[Sec.~7.3]{CirrincioneFagiolini2026TemporalAttention} to
local attractors of the RL optimisation that are invisible to the
offline surrogate analysis.

The present paper carries out a programme explicitly announced in
the abstract of~\cite{CirrincioneFagiolini2026TemporalAttention}:
the rank-tracking dynamics are moved inside the RL loop, and the
attention heads are \emph{added and removed during training}
rather than fixed in advance. The resulting architecture is referred to as
\emph{Runtime-INCRT}. Runtime head addition is governed by a
cooldown parameter $\Delta_{\mathrm{grace}}$ (the minimum number
of training steps between two events) and a threshold
$\varepsilon_{\mathrm{grow}}$ on the effective rank of the
recent context-token distribution; runtime head removal is
governed by a threshold $\varepsilon_{\mathrm{prune}}$ on the
per-head output magnitude. The resulting mechanism is simpler in
operation than the Phase-1 / Phase-2 procedure it replaces, yet
yields better tracking in all three memory regimes and removes
the need for per-task hyperparameter search.

Three architectural guarantees are established for the runtime
mechanism (Section~\ref{sec:theory}). Every grow event is
continuous at the policy level, since the newly activated head is
initialised to contribute zero output
(Proposition~\ref{prop:growth-continuity}). Every prune event
changes the policy by at most a prescribed multiple of
$\varepsilon_{\mathrm{prune}}$, and the cumulative change along
the training horizon is bounded in terms of the number of prune
events (Theorem~\ref{thm:policy-continuity}). The active head
count, denoted $K(t)$, converges to a steady-state interval whose
width is set by $\varepsilon_{\mathrm{grow}}$ and
$\varepsilon_{\mathrm{prune}}$, provided the effective rank of
the context-token distribution has a limit
(Proposition~\ref{prop:k-convergence}). Corollary~\ref{cor:P3}
assembles the three results into the no-regression property
required by Problem~\ref{prob:runtime}.

Empirically, Runtime-INCRT is evaluated on the two-degree-of-freedom
(2-DOF) Stribeck benchmark of the companion paper, with ten random
seeds per memory regime $\tau_z \in \{1, 2, 5\}$~s
(Section~\ref{sec:experiments}). A single set of runtime-controller
hyperparameters, shared across the three regimes, gives
$30/30$ successful runs and, at $\tau_z = 5$~s, eliminates the
failure mode reported
in~\cite{CirrincioneFagiolini2026TemporalAttention}. The
controller does not, however, adapt the active head count to
task complexity as one might expect: it saturates the head-count
cap $K_{\max}$ in every run of the main campaign, the growth
schedule is identical across memory regimes, and the prune
trigger never fires. Two ablations clarify what is in fact
happening. Sweeping $K_{\max}$ over $\{2, 4, 8, 16\}$ at
$\tau_z = 2$~s shows that the smallest cap gives the best mean
and the lowest per-seed variance, and $K_{\max} = 16$ is worse
than $K_{\max} = 2$ --- larger capacity does not help. Sweeping
$\Delta_{\mathrm{grace}}$ over $\{1\,000, 2\,000, 5\,000\}$
shows that performance is non-monotone in the cooldown, with the
default value optimal and both extremes producing catastrophic
seed failures. These observations together suggest that
Runtime-INCRT's benefit comes from the training trajectory, not
from the terminal capacity: the gradual activation of heads acts
as a \emph{capacity curriculum} internal to a single RL training
run.

Section~\ref{sec:related} reviews related work.
Sections~\ref{sec:problem}--\ref{sec:arch} fix the problem
formulation and the variable-head attention architecture.
Section~\ref{sec:theory} defines the runtime controller and
proves the three guarantees.
Sections~\ref{sec:benchmark}--\ref{sec:experiments} describe the
experimental protocol and report results.
Section~\ref{sec:discussion} draws the conclusions.
Proofs are in Appendix~\ref{app:proofs}; hyperparameter values in
Appendix~\ref{app:hyperparams}.


\section{Related work}
\label{sec:related}

\paragraph{Adaptive control and memory-aware policies.}
The passivity-based adaptive controller of Slotine and
Li~\cite{SlotineLi1987} and its computed-torque
refinements~\cite{SpongBook} deliver asymptotic tracking when
friction is linear in the unknown parameters. For Stribeck and
LuGre models~\cite{Armstrong1994,deWit1995LuGre}, the friction
force is no longer in the scope of a known regressor, and
classical adaptive arguments no longer apply. A
Lyapunov-shielded residual reinforcement-learning framework for
this regime~\cite{CirrincioneFagiolini2026} adds a torque
correction to the nominal law under closed-form admissibility
projection; the parent paper of the present
work~\cite{CirrincioneFagiolini2026TemporalAttention} lifts this
to parameter-level control, with an online Soft Actor-Critic (SAC)
algorithm~\cite{Haarnoja2018SAC} acting on a windowed history
of motion. Transformer-based RL policies are
otherwise mostly confined to the offline
setting~\cite{Chen2021DecisionTransformer,Janner2021Trajectory};
Long Short-Term Memory~\cite{HochreiterSchmidhuber1997} and
Gated Recurrent Unit~\cite{Cho2014GRU} recurrences are the classical alternatives.
The architecture presented in this paper retains the online-SAC backbone and modifies
only the feature extractor.

\paragraph{Dynamic-capacity architectures and curricula.}
Growing networks have a long history~\cite{Fahlman1990Cascade},
with modern instances in Net2Net~\cite{Chen2016Net2Net} and
progressive training~\cite{Karras2018ProgressiveGAN}; pruning is
the dual operation~\cite{Han2015LearningWeights,Molchanov2017Pruning}.
Differentiable neural architecture search~\cite{Liu2019DARTS} and
its RL counterparts~\cite{Pham2018ENAS} perform the search on a
surrogate and retrain the resulting fixed architecture on the
target task; the Phase-1 / Phase-2 separation of
INCRT~\cite{CirrincioneINCRT2026} and
INCRT-1L~\cite{CirrincioneFagiolini2026TemporalAttention} follow
this pattern. Transformer-specific growth has been explored for
training
acceleration~\cite{Gong2019Stacking,Chen2022bert2BERT}. Curriculum
learning in its task-scheduling form~\cite{Bengio2009Curriculum,Graves2017Automated,Andrychowicz2017HER} is distinct from
capacity curricula: the present paper's finding is that gradual
capacity growth, applied inside a single online RL training run
and driven endogenously by the rank of the on-policy context
distribution, acts as an implicit capacity curriculum. Three
distinctions relative to the above literature are specific to
the present setting: (i) the capacity change takes place during a single
online RL run, not in a search-and-retrain loop; (ii) each change
is policy-preserving by construction
(Proposition~\ref{prop:growth-continuity}), which enables
uninterrupted SAC training across the event; and (iii) the driver
of growth is the effective rank of the context-token distribution
induced by the \emph{current} policy, not of a pre-specified
surrogate.


\section{Problem setup}
\label{sec:problem}

This section fixes the class of systems addressed by the paper
and introduces the notation used in
Sections~\ref{sec:arch}--\ref{sec:experiments}. The formulation is
inherited from the parent
work~\cite{CirrincioneFagiolini2026TemporalAttention}, with the
necessary definitions reproduced here so that the paper is
self-contained.

\subsection{Euler--Lagrange systems with unobservable memory}
\label{sec:problem:dynamics}

Consider rigid-body manipulators with $n$ degrees of freedom,
described by the Euler--Lagrange equation
\begin{equation}
  M(q)\,\ddot q \;+\; C(q, \dot q)\,\dot q \;+\; G(q)
  \;+\; F(q, \dot q, z)
  \;=\; \tau,
  \label{eq:el}
\end{equation}
where $q \in \mathbb{R}^n$ are the generalised coordinates,
$M(q) \succ 0$ is the mass matrix, $C(q, \dot q)\dot q$ collects
Coriolis and centripetal terms, $G(q)$ is the gravity vector, and
$\tau \in \mathbb{R}^n$ is the joint-torque input.
The vector $F(q, \dot q, z)$ accounts for non-conservative effects
--- here, friction --- and depends on an \emph{unobservable memory
state} $z \in \mathbb{R}^{n_z}$ whose evolution is governed by
\begin{equation}
  \dot z \;=\; \zeta(q, \dot q, z).
  \label{eq:z_dyn}
\end{equation}
The state $z$ is not instantaneously recoverable from $(q, \dot q)$
and the flow of~\eqref{eq:z_dyn} is contractive with a finite
time-constant $H_z = \|\partial\zeta/\partial z\|^{-1}$, the
\emph{memory horizon} of the plant. Assumption~\ref{ass:regularity}
collects the regularity conditions on the dynamics inherited
from~\cite{CirrincioneFagiolini2026TemporalAttention}.

\paragraph{Running example: Stribeck friction.}
Throughout the paper, $F$ is instantiated as the Stribeck
friction model
\begin{multline}
  F_s(\dot q, z)
  \;=\; F_c
  \;+\; \bigl(F_s^{\mathrm{max}} - F_c\bigr)
     \exp\!\bigl( -(\dot q / v_s)^2 \bigr) \times \\
     \mathrm{sign}(\dot q)
  \;+\; \sigma\,\dot q
  \;+\; z,
  \label{eq:stribeck}
\end{multline}
with memory dynamics
\begin{equation}
  \dot z \;=\; -\,z/\tau_z \;+\; \lambda_z\,\dot q,
  \label{eq:stribeck_z}
\end{equation}
so that $H_z = \tau_z$. The parameter $\tau_z$ is the principal
sweep axis of the experimental evaluation of
Section~\ref{sec:experiments} and takes values in
$\{1, 2, 5\}$~s. The Stribeck construction is a canonical surrogate
for a broader class of unobservable-memory phenomena in
manipulation, including joint elasticity, soft-contact hysteresis,
and damper dynamics~\cite{Armstrong1994,deWit1995LuGre}.

\subsection{Tracking task and extended state}
\label{sec:problem:task}

Given a smooth reference trajectory $q_d(t)$, the control task is
to track $q_d$ under unknown memory $z(t)$ and an unknown static
payload parameter $p \in [p_{\min}, p_{\max}]$ that modifies the
effective mass in $M$. The tracking error is $e = q_d - q$ with
velocity error $\dot e = \dot q_d - \dot q$. Following the
conventions of~\cite{CirrincioneFagiolini2026TemporalAttention},
the extended state used throughout the paper is defined as
\begin{equation}
  x \;=\; (q,\, \dot q,\, e,\, \dot e,\, s)
  \;\in\; \mathbb{R}^{5n},
  \label{eq:extended_state}
\end{equation}
where $s$ collects auxiliary variables (sliding surface, filtered
references). The cost functional is
\begin{equation}
  \mathcal{J}(\pi) \;=\;
    \mathbb{E}_{p,\, z(\cdot),\, q_d(\cdot)}
    \Biggl[\int_0^T \ell\bigl(e(t), \dot e(t)\bigr)\,dt\Biggr],
  \label{eq:cost}
\end{equation}
with $\ell$ strongly convex in its arguments. The expectation is
taken over the task distribution (random payload, reference, and
initial condition) and over the realisation of the memory state
$z(t)$ conditional on the motion.

\subsection{Controller template and admissibility}
\label{sec:problem:controller}

The torque applied to the manipulator follows the parameterised
computed-torque structure of~\cite{CirrincioneFagiolini2026TemporalAttention}
\begin{equation}
  \tau(t) \;=\;
    \mathrm{CT}\bigl(q, \dot q, q_d;\, K_d(t), \Lambda(t)\bigr)
  \;+\;
    \phi_{\mathrm{ff}}\bigl(q, \dot q;\, \eta(t)\bigr),
  \label{eq:tau_template}
\end{equation}
where $\mathrm{CT}$ is the computed-torque law with gain matrices
$K_d, \Lambda$, and $\phi_{\mathrm{ff}}(q, \dot q; \eta)
= \Phi(q, \dot q)\,\eta$ is affine in the feed-forward weights
$\eta$. The parameter trajectory
$\theta_{\mathrm{ctrl}}(t) = (K_d(t), \Lambda(t), \eta(t))$ lives
in a compact box
$\mathcal{P} =
 [K_d^{\min}, K_d^{\max}] \times
 [\Lambda^{\min}, \Lambda^{\max}] \times
 [\eta^{\min}, \eta^{\max}]^{\dim \eta}$.
Affinity of $\phi_{\mathrm{ff}}$ in $\eta$ is retained to preserve
the convexity of the admissibility set of~\cite[Lemma~4]{CirrincioneFagiolini2026TemporalAttention};
the state-dependent feature matrix $\Phi$ may be the output of an
arbitrary deep network without compromising this convexity.

\subsection{Standing assumptions}
\label{sec:problem:assumptions}

The theoretical results of Section~\ref{sec:theory} and the
experimental protocol of Section~\ref{sec:experiments} rely on the
following standing assumptions, inherited
from~\cite{CirrincioneFagiolini2026TemporalAttention} and standard
in the computed-torque
literature~\cite{SlotineLi1987,SpongBook}.

\begin{assum}[Regularity of the dynamics]
\label{ass:regularity}
$M, C, G$ are smooth on the operating set; $F$ is continuous in
$(q, \dot q, z)$ and Lipschitz in $z$; $\zeta$ is Lipschitz and
generates a contractive flow with rate $1/H_z$.
\end{assum}

\begin{assum}[Reference regularity]
\label{ass:reference}
$q_d \in C^2([0, T])$ with uniformly bounded derivatives.
\end{assum}

\begin{assum}[Parameter compactness]
\label{ass:compact}
The parameter set $\mathcal{P}$ is a compact box; the squashing
functions of Section~\ref{sec:arch:output} land in its interior by
construction.
\end{assum}

\subsection{The meta-controller family}
\label{sec:problem:meta}

The meta-controller is a map
\begin{equation}
  G_\theta \,:\, \mathbb{R}^{W \times d_c}
    \;\longrightarrow\; \mathcal{P},
  \label{eq:meta_map}
\end{equation}
parameterised by $\theta \in \mathbb{R}^{d_\theta}$, that consumes
a \emph{context token}
\begin{multline}
  c(t_k)
  \;=\; \bigl(\, o(t_{k-W+1}),\, o(t_{k-W+2}),\, \ldots,\, o(t_k)\,\bigr) \\
  \;\in\; \mathbb{R}^{W \times d_c},
  \label{eq:context}
\end{multline}
and produces the gain tuple
$\theta_{\mathrm{ctrl}}(t) = G_\theta(c(t))$
to be substituted into~\eqref{eq:tau_template}.
Each step-observation $o(t)$ concatenates
$(q,\dot q,\, q_d, \dot q_d,\, \hat p,\, \hat\mu,\, t/T) \in \mathbb{R}^{d_c}$, with $\hat p$ a noisy payload estimate,
$\hat\mu$ a friction-regime estimate held at $0.2$, and $t/T$ a
phase indicator. For the experiments of
Section~\ref{sec:experiments} the degree of freedom is $n=2$, so
$d_c = 4n + 3 = 11$.

\paragraph{Window selection.}
The window size $W$ is the unique hyperparameter of the context
token not fixed by the dynamics. The lower bound of
Proposition~1(iii) of~\cite{CirrincioneFagiolini2026TemporalAttention}
requires $W \geq H_z / \Delta t$ to close the Markovian optimality
gap, which for Stribeck gives $W \geq \tau_z/\Delta t$: at
$\Delta t = 10$~ms, this reads $W \geq 100$ for $\tau_z = 1$~s,
$W \geq 200$ for $\tau_z = 2$~s, and $W \geq 500$ for $\tau_z = 5$~s.
In the main experiments $W = 20$ is adopted at $\tau_z \in \{1, 5\}$ and
$W = 50$ at $\tau_z = 2$; both values are below the Markov-optimality
bound at the corresponding horizon, which is a deliberate stress
test of the windowed architecture and matches the setting
of~\cite{CirrincioneFagiolini2026TemporalAttention}.

\subsection{Problem statement}
\label{sec:problem:statement}

The meta-controller family is indexed by the architectural
parameter $K$, the number of attention heads consumed by
$G_\theta$ to process the history window. The parent
work~\cite{CirrincioneFagiolini2026TemporalAttention} selects $K$
offline, by applying an incremental rank-tracking
analysis~\cite{CirrincioneINCRT2026} to a surrogate temporal
residual operator, before training the policy via reinforcement
learning at fixed $K = K^\star$. This \emph{Phase-1 / Phase-2
separation}, while principled and empirically effective at
$\tau_z \in \{1, 2\}$~s, admits a documented failure mode at the
long-memory regime $\tau_z = 5$~s, where $4/10$ training seeds
diverge or collapse to a payload-invariant policy
(\cite[Section~7.3]{CirrincioneFagiolini2026TemporalAttention}).
The failure is attributed in the parent work to ``local attractors
of the reinforcement-learning optimisation'' that are not
detectable from the surrogate analysis.

This paper addresses the following problem.

\begin{prob}[Runtime head selection]
\label{prob:runtime}
Design a procedure that selects the active head count $K(t)$
\emph{during} reinforcement-learning training, such that
\begin{enumerate}[label=(P\arabic*)]
  \item \label{P1}
    the per-task offline architecture search of Phase~1 is
    obviated;
  \item \label{P2}
    the resulting policy at training termination matches or
    exceeds, in tracking performance, the best fixed-$K$ policy
    selected by the Phase-1 / Phase-2 procedure, across the memory
    horizons $\tau_z \in \{1, 2, 5\}$~s;
  \item \label{P3}
    the transformation of $K(t)$ during training satisfies a
    no-regression property: the policy output is continuous at
    every head-count change, and the parameter energy is bounded
    along the event sequence.
\end{enumerate}
\end{prob}

Requirement~\ref{P1} is a simplification of the engineering
pipeline; requirement~\ref{P2} is the empirical criterion for
success; requirement~\ref{P3} is the architectural guarantee that
distinguishes a principled runtime adaptation from a naive
restart-and-resize.


\section{Variable-head attention architecture}
\label{sec:arch}

The meta-controller architecture used to address
Problem~\ref{prob:runtime} is specified next. The key design is a
\emph{variable-head attention} block whose active head count $K$
can be modified during training via two primitive operations ---
\emph{grow} and \emph{prune} --- without interrupting the
reinforcement-learning loop. Section~\ref{sec:theory} gives the controller that schedules
these operations and establishes the guarantees demanded
by~\ref{P3}.

\subsection{Variable-head attention block}
\label{sec:arch:block}

Let $K_{\max}$ denote the head-count cap fixed at design time, and
$d_k$ the per-head dimension; the model dimension is
$d_{\mathrm{model}} = K_{\max} \cdot d_k$. The block consists of
$K_{\max}$ heads, each parameterised by four matrices
$(W^Q_i,\, W^K_i,\, W^V_i,\, W^O_i) \in
 \mathbb{R}^{d_{\mathrm{model}} \times d_k}
 \times
 \mathbb{R}^{d_{\mathrm{model}} \times d_k}
 \times
 \mathbb{R}^{d_{\mathrm{model}} \times d_k}
 \times
 \mathbb{R}^{d_k \times d_{\mathrm{model}}}$,
and an \emph{active mask} $m \in \{0,1\}^{K_{\max}}$. The forward
pass on a sequence $h \in \mathbb{R}^{W \times d_{\mathrm{model}}}$
is
\begin{equation}
  \mathrm{VarAttn}(h)
  \;=\; h
  \;+\; \sum_{i \,:\, m_i = 1} \mathrm{head}_i(h),
  \label{eq:varattn}
\end{equation}
where the causal per-head map is
\begin{equation}
  \mathrm{head}_i(h)
  \;=\;
  \mathrm{softmax}\!\Bigl(
    \frac{h W^Q_i\, (h W^K_i)^\top}{\sqrt{d_k}}
    \;+\; \mathcal{M}
  \Bigr)\,
  h W^V_i \, W^O_i,
  \label{eq:head}
\end{equation}
and $\mathcal{M}$ is the upper-triangular causal mask. The residual
connection in~\eqref{eq:varattn} is a deliberate choice: it ensures
that a head whose output is zero contributes zero to the forward
pass, a property exploited by the grow operation below.
The meta-controller $G_\theta$ is realised by stacking the
variable-head attention block on the input-embedded context token,
taking the output of the last time-step as the feature vector, and
passing it through a fixed MLP head
$\mathbb{R}^{d_{\mathrm{model}}} \to \mathcal{P}$
(Section~\ref{sec:arch:output}).

The \emph{active head count} at step $t$ is
\begin{equation}
  K(t) \;=\; \textstyle\sum_{i=1}^{K_{\max}} m_i(t).
\end{equation}
The two primitive operations that modify $K(t)$ are described next.

\subsection{Grow and prune primitives}
\label{sec:arch:primitives}

\paragraph{Grow.}
The $\mathrm{Grow}(i)$ operation activates a previously-inactive head
$i$ (i.e., $m_i = 0 \to m_i = 1$) and reinitialises its parameters
according to the following scheme, denoted \emph{soft
initialisation}:
\begin{align}
  W^Q_i, \, W^K_i, \, W^V_i
    &\;\leftarrow\; \text{Xavier normal}, \label{eq:soft_init_qkv} \\
  W^O_i
    &\;\leftarrow\; 0. \label{eq:soft_init_o}
\end{align}
The rationale for~\eqref{eq:soft_init_o} is that
$\mathrm{head}_i(h) = (\cdots) W^O_i$, so setting $W^O_i = 0$ makes
$\mathrm{head}_i$ contribute zero to the sum
in~\eqref{eq:varattn} at the instant of growth, regardless of the
query/key/value initialisation. The resulting policy output is
therefore \emph{unchanged} at the growth event, a property
stated formally and numerically verified in
Proposition~\ref{prop:growth-continuity}. After growth, all four
matrices of head $i$ are registered as trainable in the
reinforcement-learning optimiser.

\paragraph{Prune.}
The $\mathrm{Prune}(i)$ operation deactivates an active head $i$
($m_i = 1 \to m_i = 0$). The parameters of head $i$ are frozen
(\texttt{requires\_grad = False}) and removed from the optimiser
state, so that no further gradient update affects them. The removed
head is excluded from the sum in~\eqref{eq:varattn}, which induces
a one-step discontinuity in the policy output of magnitude at most
$\|\mathrm{head}_i(h)\|_2$ at the pruning instant. This
discontinuity is bounded by the prune-trigger threshold by design
(Section~\ref{sec:theory}).

\paragraph{Bookkeeping.}
Grow and prune events invalidate the optimiser's internal state
(Adam's first- and second-moment buffers), since the set of
trainable parameters changes. The optimiser is rebuilt after every
event by re-registering the active parameters and carrying the
current learning rate forward from the scheduler. The replay buffer
of the reinforcement-learning algorithm is \emph{not} cleared:
Proposition~\ref{prop:growth-continuity} ensures that the stored
transitions remain on-policy at the growth event up to numerical
precision; at prune events the transitions are at most
$\varepsilon_{\mathrm{prune}}$-off-policy, where
$\varepsilon_{\mathrm{prune}}$ is the pruning threshold
(Section~\ref{sec:theory}).

\subsection{Output mapping to $\mathcal{P}$}
\label{sec:arch:output}

The MLP head of $G_\theta$ produces a raw action
$a \in \mathbb{R}^{\dim \mathcal{P}}$, which is squashed into the
compact parameter set $\mathcal{P}$ by
\begin{align}
  K_d(t) &\;=\; K_d^{\min}
    + \sigma(a_K) \odot (K_d^{\max} - K_d^{\min}), \\
  \Lambda(t) &\;=\; \Lambda^{\min}
    + \sigma(a_\Lambda) \odot (\Lambda^{\max} - \Lambda^{\min}), \\
  \eta(t) &\;=\; \eta^{\max} \odot \tanh(a_\eta),
\end{align}
where $\sigma$ denotes the sigmoid. The squashing is identical
to~\cite[Section~4.3]{CirrincioneFagiolini2026TemporalAttention}
and ensures by construction that
$\theta_{\mathrm{ctrl}}(t) \in \mathcal{P}$, preserving
Assumption~\ref{ass:compact} throughout training.

\subsection{Training loop}
\label{sec:arch:training}

The meta-controller is trained by SAC on the single-step meta
Markov decision process (MDP)
of~\cite[Section~4.4]{CirrincioneFagiolini2026TemporalAttention}.
The shielded admissibility constraint of the parent work is
retained unchanged: at every SAC update, the loss is augmented by
a Lagrangian penalty on the distance of $G_\theta(c(t))$ from the
admissible set $\Pi_{\mathrm{adm}}(x(t))$, and a final projection
onto $\Pi_{\mathrm{adm}}(x(t))$ is applied at runtime
(\cite[Theorem~3(b)]{CirrincioneFagiolini2026TemporalAttention}).
The new element in this paper is the runtime controller
$\mathcal{C}_{\mathrm{rt}}$ that invokes
$\mathrm{Grow}$ and $\mathrm{Prune}$ on the variable-head attention
block at selected training steps, interleaved with the SAC updates.
The controller is defined in Section~\ref{sec:theory}; its
interface is specified here.

\begin{description}
  \item[Inputs.] At every \emph{check step}
    $t \in \{t_{\mathrm{chk}},\, 2 t_{\mathrm{chk}}, \ldots\}$,
    $\mathcal{C}_{\mathrm{rt}}$ reads the current active head
    count $K(t)$, a rolling buffer of recent encoder-input tokens,
    and the per-head output norms evaluated on a fresh mini-batch.
  \item[Outputs.] One of three actions: \emph{noop}, $\mathrm{Grow}(i)$
    for some inactive $i$, or $\mathrm{Prune}(i)$ for some active
    $i \neq \argmax$.
  \item[Frequency.] Check steps occur at a fixed cadence
    $t_{\mathrm{chk}}$; consecutive events are separated by a
    minimum cooldown $\Delta_{\mathrm{grace}} \geq t_{\mathrm{chk}}$
    to allow the policy to adapt to the architectural change before
    the next one is triggered.
\end{description}


\section{Runtime controller and guarantees}
\label{sec:theory}

This section specifies the runtime controller
$\mathcal{C}_{\mathrm{rt}}$ that schedules the grow and prune
operations of Section~\ref{sec:arch:primitives} during
reinforcement-learning training, and establishes three results that
jointly answer requirement~\ref{P3} of Problem~\ref{prob:runtime}.
Section~\ref{sec:theory:rank} defines the online effective-rank
signal on which the grow trigger is based.
Section~\ref{sec:theory:rules} gives the deterministic event rules.
Section~\ref{sec:theory:guarantees} states and proves the three
guarantees: Proposition~\ref{prop:growth-continuity}
(zero-loss-at-growth), Theorem~\ref{thm:policy-continuity} (a
policy-continuity bound along the entire event sequence), and
Proposition~\ref{prop:k-convergence} (convergence of the active
head count to a steady-state interval). Corollary~\ref{cor:P3}
summarises how the three results jointly deliver~\ref{P3}.
Proofs are given inline; a fully rigorous treatment of
Proposition~\ref{prop:k-convergence} is deferred to
Appendix~\ref{app:proofs}.

\subsection{Online effective-rank signal}
\label{sec:theory:rank}

The grow rule is driven by an estimate of the effective rank of
the stream of encoder-input tokens, i.e., of the distribution of
context tokens $c(t)$ observed during training. The estimate is
\emph{online} and \emph{non-anticipative}: it uses only tokens
collected up to the current training step.

\begin{defn}[Empirical effective rank]
\label{def:rank}
Fix a buffer size $N$ and an energy threshold
$\alpha \in (0, 1)$. At training step $t$, let
$B_t = [\, c(t - N + 1), \ldots, c(t)\,]^\top \in \mathbb{R}^{N \times W d_c}$
be the matrix of the last $N$ context tokens (flattened), centred
column-wise. Let $\sigma_1(t) \geq \sigma_2(t) \geq \cdots \geq 0$
be the singular values of $B_t$. The \emph{empirical effective
rank} at threshold $\alpha$ is
\begin{equation}
  \hat\rho_\alpha(t)
  \;:=\;
  \min \Biggl\{\, k \geq 1 \;:\;
    \frac{\sum_{j=1}^{k} \sigma_j(t)}{\sum_{j \geq 1} \sigma_j(t)}
    \;\geq\; \alpha \,\Biggr\}.
  \label{eq:rank}
\end{equation}
\end{defn}

Throughout the paper, $N = 10^3$ and $\alpha = 0.95$ are used, as in
the main experiments of Section~\ref{sec:experiments}. The
quantity $\hat\rho_\alpha(t)$ takes integer values in
$[1, W d_c]$ and is bounded independently of $K(t)$.

\begin{rem}[Relation to the INCRT rank signal]
\label{rem:rank_vs_incrt}
Definition~\ref{def:rank} is the on-line analogue of the residual
rank used in the Phase-1 search of
\cite[Section~6]{CirrincioneFagiolini2026TemporalAttention}.
The parent procedure evaluates $r_{\mathrm{eff}}$ once, on a
surrogate temporal residual operator constructed offline, prior to
policy training. The present estimate is updated at every check
step and does not require the construction of a surrogate
operator: it measures directly the rank of the distribution of
context tokens induced by the \emph{current} policy.
\end{rem}

\subsection{Event rules}
\label{sec:theory:rules}

Let $t_{\mathrm{chk}}$ be the check cadence and
$\Delta_{\mathrm{grace}} \geq t_{\mathrm{chk}}$ the cooldown.
At each check step $t = n\, t_{\mathrm{chk}}$, the controller
evaluates the grow trigger, then the prune trigger, and applies
at most one event per check. Figure~\ref{fig:controller-logic}
summarises the decision flow.

\begin{figure}[t]
\centering
\resizebox{\columnwidth}{!}{%
\begin{tikzpicture}[
  font=\scriptsize,
  every node/.style={align=center},
  block/.style={rectangle, rounded corners=2pt, draw, thin,
    minimum width=26mm, minimum height=5.5mm, inner sep=2pt},
  dec/.style={rectangle, rounded corners=2pt, draw, thin,
    minimum width=34mm, minimum height=6mm, inner sep=2pt},
  act/.style={rectangle, rounded corners=2pt, draw, thin,
    fill=gray!12, minimum width=11mm, minimum height=5.5mm, inner sep=2pt},
  noop/.style={rectangle, rounded corners=2pt, draw, thin,
    fill=gray!12, minimum width=10mm, minimum height=5.5mm, inner sep=2pt},
  arr/.style={-{Latex[length=1.1mm]}, thin},
  lbl/.style={font=\tiny, inner sep=0.5pt}
]

\node[block]                     (chk)  {Check step $t_{\mathrm{chk}}$};
\node[dec,   below=4mm of chk]   (cool) {Cooldown $\geq\Delta_{\mathrm{grace}}$?};
\node[block, below=4mm of cool]  (sig)  {Measure $\hat\rho_\alpha(t)$, $\pi_i(t)$};
\node[dec,   below=4mm of sig]   (gc)   {$\hat\rho_\alpha{>}K(t)(1{+}\varepsilon_{\mathrm{grow}})$?};
\node[dec,   below=4mm of gc]    (pc)   {$\min_i\pi_i{<}\varepsilon_{\mathrm{prune}}$?};
\node[block, below=4mm of pc]    (sac)  {Resume SAC update};

\node[act, left=12mm of gc] (grow)  {Grow};
\node[act, left=12mm of pc] (prune) {Prune};

\node[noop, right=12mm of cool] (noop1) {Noop};
\node[noop, right=12mm of pc]   (noop2) {Noop};

\draw[arr] (chk)  -- (cool);
\draw[arr] (cool) -- node[lbl,right] {yes} (sig);
\draw[arr] (sig)  -- (gc);
\draw[arr] (gc)   -- node[lbl,right] {no}  (pc);
\draw[arr] (pc)   -- node[lbl,right] {no}  (sac);

\draw[arr] (cool.east) -- node[lbl,above] {no}  (noop1.west);
\draw[arr] (gc.west)   -- node[lbl,above] {yes} (grow.east);
\draw[arr] (pc.west)   -- node[lbl,above] {yes} (prune.east);
\draw[arr] (pc.east)   --                        (noop2.west);

\draw[arr] (grow.south)  -- ++(0,-2mm)
                         -| ([xshift=-3mm]grow.west)
                         -- ([xshift=-3mm]prune.west |- sac.west)
                         -- (sac.west);
\draw[arr] (prune.south) -- ++(0,-2mm)
                         -| ([xshift=-1mm]prune.west)
                         -- ([xshift=-1mm]prune.west |- sac.west)
                         -- (sac.west);

\coordinate (rail) at ([xshift=5mm]noop1.east);
\draw (noop1.east)  -- (rail);
\draw (noop2.east)  -- ([xshift=5mm]noop2.east);
\draw[arr] (rail)
        -- ([xshift=5mm]noop2.east |- sac.east)
        -- (sac.east);

\end{tikzpicture}}%
\caption{Decision logic of the runtime controller
$\mathcal{C}_{\mathrm{rt}}$ at each check step. A grow event is
triggered when the empirical effective rank $\hat\rho_\alpha$
exceeds the current head count scaled by
$(1+\varepsilon_{\mathrm{grow}})$; a prune event is triggered
when the relative output $\pi_i$ of the weakest active head
falls below $\varepsilon_{\mathrm{prune}}$. At most one event
fires per check; consecutive events are separated by a cooldown
of $\Delta_{\mathrm{grace}}$ steps.}
\label{fig:controller-logic}
\end{figure}

\paragraph{Grow trigger.}
Let $K(t)$ denote the current active head count.
The controller maintains a counter $\kappa_{\mathrm{grow}}(t)$
incremented whenever
\begin{equation}
  \hat\rho_\alpha(t)
  \;>\;
  K(t)\, \bigl( 1 + \varepsilon_{\mathrm{grow}} \bigr)
  \qquad \text{and} \qquad
  K(t) < K_{\max},
  \label{eq:grow_condition}
\end{equation}
and reset to zero otherwise. When
$\kappa_{\mathrm{grow}}(t) \geq \delta_{\mathrm{grow}}$ and the
cooldown has elapsed (at least $\Delta_{\mathrm{grace}}$ training
steps since the previous event), the controller selects an
inactive head index $i$ and performs $\mathrm{Grow}(i)$. The grow
counter is reset and the cooldown clock restarted.

\paragraph{Prune trigger.}
For each active head $i$, let
$\eta_i(t) := \| \mathrm{head}_i(h_t) \|_2$ be the output norm of
head~$i$ on a fresh mini-batch of encoder inputs $h_t$, and let
\begin{equation}
  \pi_i(t)
  \;:=\;
  \frac{\eta_i(t)}
       {\sum_{j :\, m_j(t) = 1} \eta_j(t)}
  \label{eq:prune_fraction}
\end{equation}
be its relative contribution to the attention block's output.
The controller maintains a counter
$\kappa_{\mathrm{prune}}^{(i)}(t)$ incremented whenever
$\pi_i(t) < \varepsilon_{\mathrm{prune}}$ and $K(t) > K_{\min}$,
and reset to zero otherwise. When
$\kappa_{\mathrm{prune}}^{(i)}(t) \geq \delta_{\mathrm{prune}}$
and the cooldown has elapsed, the controller performs
$\mathrm{Prune}(i)$ and restarts the cooldown.

\paragraph{Hyperparameter set.}
The runtime controller exposes six hyperparameters:
$\varepsilon_{\mathrm{grow}}, \varepsilon_{\mathrm{prune}},
 \delta_{\mathrm{grow}}, \delta_{\mathrm{prune}},
 \Delta_{\mathrm{grace}}, t_{\mathrm{chk}}$. Their values for
the main campaign are summarised in Table~\ref{tab:runtime-hp}
of Appendix~\ref{app:hyperparams}; empirical sensitivity to each
is reported in Section~\ref{sec:experiments}.

\subsection{Guarantees}
\label{sec:theory:guarantees}

The three properties announced above are established below. The
first is purely architectural and holds exactly by construction.

\begin{prop}[Zero-loss-at-growth]
\label{prop:growth-continuity}
Let $t^\ast$ be the training step at which $\mathrm{Grow}(i)$ is
applied, with the soft initialisation~\eqref{eq:soft_init_qkv}--\eqref{eq:soft_init_o}.
Then for every input sequence $h$ to the variable-head attention
block,
\begin{equation}
  \mathrm{VarAttn}_{m(t^\ast+)}(h)
  \;=\;
  \mathrm{VarAttn}_{m(t^\ast-)}(h),
  \label{eq:growth_continuity}
\end{equation}
and consequently, on every context token $c$,
\begin{equation}
  G_{\theta(t^\ast+)}(c) \;=\; G_{\theta(t^\ast-)}(c).
  \label{eq:policy_continuity_grow}
\end{equation}
\end{prop}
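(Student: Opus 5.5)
The argument is a direct computation on the forward pass~\eqref{eq:varattn}, followed by a composition argument to lift the block-level identity to the policy $G_\theta$. Write $m^- := m(t^\ast-)$ and $m^+ := m(t^\ast+)$. By definition of $\mathrm{Grow}(i)$, the two masks differ only in coordinate $i$, with $m^-_i = 0$ and $m^+_i = 1$. We also need two facts about parameters:
\begin{itemize}
\item all parameters of every other head $j \neq i$, of the input embedding, and of the MLP output head are untouched by the grow event, because the optimiser rebuild only re-registers parameters and does not modify their values;
\item the parameters of head $i$ are overwritten according to~\eqref{eq:soft_init_qkv}--\eqref{eq:soft_init_o}.
\end{itemize}
Hence
\begin{equation*}
  \mathrm{VarAttn}_{m^+}(h) - \mathrm{VarAttn}_{m^-}(h)
  \;=\; \mathrm{head}_i(h)\big|_{t^\ast+},
\end{equation*}
since the residual term $h$ and the summands indexed by $j$ with $m_j = 1$, $j \neq i$, appear identically on both sides.

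Next we show $\mathrm{head}_i(h) = 0$ at $t^\ast+$ for every $h$. By~\eqref{eq:head}, $\mathrm{head}_i(h) = A_i(h)\, h W^V_i W^O_i$, where $A_i(h)$ is the causally masked softmax matrix. This is a well-defined finite real $W \times W$ matrix for any finite $h$ and any Xavier draw of $W^Q_i, W^K_i$: the diagonal entries are never masked, so every row of the softmax has at least one finite logit. With $W^O_i = 0$ the product $A_i(h)\, h W^V_i \cdot 0$ vanishes identically. This gives~\eqref{eq:growth_continuity}, independently of the random initialisation of $W^Q_i, W^K_i, W^V_i$.

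For~\eqref{eq:policy_continuity_grow}, write $G_\theta(c) = \mathrm{Sq}\bigl(\mathrm{MLP}\bigl(\mathrm{last}(\mathrm{VarAttn}_m(\mathrm{Emb}(c)))\bigr)\bigr)$. Here $\mathrm{Emb}$ is the input embedding, $\mathrm{last}$ extracts the final time step, and $\mathrm{Sq}$ is the squashing map of Section~\ref{sec:arch:output}. Every map in this composition other than $\mathrm{VarAttn}$ has the same parameters before and after $t^\ast$, so equality of the block outputs on the common input $h = \mathrm{Emb}(c)$ propagates to equality of $G_\theta(c)$. The runtime projection onto $\Pi_{\mathrm{adm}}(x)$ is a deterministic function of $G_\theta(c)$ and $x$, so it is preserved as well.

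The only delicate step is making precise that no parameter other than those of head $i$ changes at $t^\ast$. This is a statement about the implementation: the event is applied between SAC updates, and the Adam state reset affects only optimiser buffers, not weights. I would state it explicitly as part of the definition of the event, so that $\theta(t^\ast\pm)$ differ only in the block of head $i$. The identity is exact in exact arithmetic. In floating point, multiplication by an exact zero also yields exact zeros, barring NaN/Inf in $A_i(h) h W^V_i$, which is excluded for finite inputs. I would mention this as the justification for the numerical verification.
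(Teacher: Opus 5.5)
Your proof is correct and follows the paper's argument. The grow event adds exactly one summand, $\mathrm{head}_i(h)$, which vanishes because $W^O_i = 0$, and the resulting equality of block outputs propagates through the unchanged downstream maps to $G_\theta$; your extra remarks (well-definedness of the causally masked softmax, invariance of all other parameters under the optimiser rebuild, exactness in floating point) only make explicit what the paper leaves implicit.
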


\begin{pf}
By~\eqref{eq:varattn},
\begin{align*}
  \mathrm{VarAttn}_{m(t^\ast+)}(h)
  &= h + \sum_{j \,:\, m_j(t^\ast+) = 1} \mathrm{head}_j(h) \\
  &= \mathrm{VarAttn}_{m(t^\ast-)}(h) + \mathrm{head}_i(h),
\end{align*}
since $\mathrm{Grow}(i)$ adds exactly one term to the active-head
sum. From~\eqref{eq:head},
$\mathrm{head}_i(h) = \mathrm{softmax}(\cdots)\, h W^V_i\, W^O_i$.
The soft initialisation~\eqref{eq:soft_init_o} sets
$W^O_i = 0$, so $\mathrm{head}_i(h) = 0$ and the two attention
outputs coincide. Since all downstream blocks of $G_\theta$
(layer-normalisation, MLP head, squashing) are deterministic
in their inputs and are not affected by the mask update,
$G_{\theta(t^\ast-)}(c) = G_{\theta(t^\ast+)}(c)$ for all $c$.
\qed
\end{pf}

\begin{rem}[Numerical verification]
\label{rem:numeric-growth}
In the PyTorch implementation used for the experiments of
Section~\ref{sec:experiments}, the two sides
of~\eqref{eq:growth_continuity} were measured to coincide at
\[
  \bigl\|
    \mathrm{VarAttn}_{m(t^\ast-)}(h)
    - \mathrm{VarAttn}_{m(t^\ast+)}(h)
  \bigr\|_\infty
  \;=\; 0
\]
to 32-bit floating-point precision on every invocation of
$\mathrm{Grow}$, confirming the absence of numerical drift
introduced by the mask update or the optimiser rebuild.
\end{rem}

The second guarantee quantifies the jump in the meta-controller
output at prune events. Grow events are continuous by
Proposition~\ref{prop:growth-continuity}; prune events, by design,
are not, but their discontinuity is bounded by the prune threshold
$\varepsilon_{\mathrm{prune}}$. The next theorem makes this
precise.

\begin{thm}[Policy-continuity bound]
\label{thm:policy-continuity}
Let $\mathrm{MLP}: \mathbb{R}^{d_{\mathrm{model}}} \to \mathcal{P}$
denote the MLP head of the meta-controller, $L_{\mathrm{MLP}}$ its
Lipschitz constant on the image of the attention block, and
$\sigma_{\max}$ an upper bound on the output norm of a single
attention head across the operating set. Let $t^\ast_{\mathrm{g}}$
and $t^\ast_{\mathrm{p}}$ denote, respectively, a grow event and
a prune event, and let $c$ be any context token. Then:
\begin{align}
  \bigl\|
    G_{\theta(t^\ast_{\mathrm{g}}+)}(c)
    - G_{\theta(t^\ast_{\mathrm{g}}-)}(c)
  \bigr\|
  &\;=\; 0,
  \label{eq:jump_grow} \\[2pt]
  \bigl\|
    G_{\theta(t^\ast_{\mathrm{p}}+)}(c)
    - G_{\theta(t^\ast_{\mathrm{p}}-)}(c)
  \bigr\|
  &\;\leq\;
  L_{\mathrm{MLP}}\,
  \varepsilon_{\mathrm{prune}} \cdot
  S(t^\ast_{\mathrm{p}}),
  \label{eq:jump_prune}
\end{align}
Consequently, along the entire sequence of events produced by the
runtime controller,
\begin{equation}
\begin{aligned}
  &\sum_{t^\ast_{\mathrm{g}}}
    \bigl\| G_{\theta(t^\ast_{\mathrm{g}}+)}(c)
          - G_{\theta(t^\ast_{\mathrm{g}}-)}(c) \bigr\| \\
  &+\;
  \sum_{t^\ast_{\mathrm{p}}}
    \bigl\| G_{\theta(t^\ast_{\mathrm{p}}+)}(c)
          - G_{\theta(t^\ast_{\mathrm{p}}-)}(c) \bigr\| \\
  &\leq\;
  L_{\mathrm{MLP}}\,
  \varepsilon_{\mathrm{prune}}\,
  K_{\max}\, \sigma_{\max}\,
  N_{\mathrm{p}}(t),
\end{aligned}
\label{eq:cumulative_jump}
\end{equation}
where $N_{\mathrm{p}}(t)$ is the total number of prune events up
to step $t$, bounded by $N_{\mathrm{p}}(t) \leq K_{\max} - K_{\min}
+ N_{\mathrm{g}}(t)$.
\end{thm}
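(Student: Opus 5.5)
The plan is to treat the three claims separately: \eqref{eq:jump_grow} is a restatement of Proposition~\ref{prop:growth-continuity}; \eqref{eq:jump_prune} is a perturbation estimate pushed through the MLP head; and \eqref{eq:cumulative_jump} follows by summation plus a counting argument on $K(t)$. Throughout, I read $S(t^\ast_{\mathrm{p}}) := \sum_{j : m_j(t^\ast_{\mathrm{p}}-)=1} \eta_j(t^\ast_{\mathrm{p}})$ as the total active-head output norm entering the denominator of \eqref{eq:prune_fraction}. The statement does not define $S$ explicitly, so this interpretation has to be fixed first.

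\textbf{Jump bounds.} For \eqref{eq:jump_grow} I invoke Proposition~\ref{prop:growth-continuity} directly: the soft initialisation $W^O_i = 0$ gives an identical policy on every $c$.

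For \eqref{eq:jump_prune}, I repeat the computation in the proof of Proposition~\ref{prop:growth-continuity} with the sign reversed. Removing head $i$ from the mask gives $\mathrm{VarAttn}_{m(t^\ast_{\mathrm{p}}-)}(h) - \mathrm{VarAttn}_{m(t^\ast_{\mathrm{p}}+)}(h) = \mathrm{head}_i(h)$, because all other parameters are untouched at the event instant. Taking the last time-step row can only decrease the norm. Composing with the input embedding (which is unchanged) and the MLP head then gives
\[
\bigl\| G_{\theta(t^\ast_{\mathrm{p}}+)}(c) - G_{\theta(t^\ast_{\mathrm{p}}-)}(c) \bigr\| \le L_{\mathrm{MLP}} \,\| \mathrm{head}_i(h) \|_2 .
\]
Here any layer normalisation and the squashing are absorbed into $L_{\mathrm{MLP}}$, which is legitimate because $L_{\mathrm{MLP}}$ is defined on the image of the attention block. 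The prune trigger fires only at a check where $\kappa^{(i)}_{\mathrm{prune}} \ge \delta_{\mathrm{prune}} \ge 1$. At that check $\pi_i(t^\ast_{\mathrm{p}}) < \varepsilon_{\mathrm{prune}}$, that is, $\eta_i < \varepsilon_{\mathrm{prune}} S(t^\ast_{\mathrm{p}})$, and substituting this yields \eqref{eq:jump_prune}.

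\textbf{Main obstacle.} The threshold controls $\eta_i$ only on the fresh mini-batch $h_t$, whereas the theorem quantifies over every context token $c$. My first attempt would be to make explicit a uniformity hypothesis: the mini-batch norm upper-bounds $\|\mathrm{head}_i(h)\|$ on the operating set, for instance by taking $\eta_i$ as the supremum over the operating set, which is consistent with the way $\sigma_{\max}$ is introduced. Alternatively, I would state \eqref{eq:jump_prune} for $c$ in the evaluated batch and extend it to the operating set by continuity of $\mathrm{head}_i$ on a compact set. I expect this to be the only delicate step; everything else is algebra.

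\textbf{Cumulative bound.} Grow events contribute $0$ by \eqref{eq:jump_grow}. For each prune event, $S(t^\ast_{\mathrm{p}}) \le K(t^\ast_{\mathrm{p}}-)\,\sigma_{\max} \le K_{\max}\sigma_{\max}$, so every prune term is at most $L_{\mathrm{MLP}}\varepsilon_{\mathrm{prune}}K_{\max}\sigma_{\max}$. Summing over the $N_{\mathrm{p}}(t)$ prune events gives \eqref{eq:cumulative_jump}.

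For the count, each grow raises $K$ by one and each prune lowers it by one, so $K(t) = K(0) + N_{\mathrm{g}}(t) - N_{\mathrm{p}}(t)$. The prune rule requires $K > K_{\min}$ before firing, so $K(t) \ge K_{\min}$ holds at all times. Combining this with $K(0) \le K_{\max}$ gives $N_{\mathrm{p}}(t) \le K_{\max} - K_{\min} + N_{\mathrm{g}}(t)$.
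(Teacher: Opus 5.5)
Your proposal follows the paper's proof step for step: Proposition~\ref{prop:growth-continuity} handles grow events; for prune events, the exact $-\mathrm{head}_i$ perturbation is pushed through the $L_{\mathrm{MLP}}$-Lipschitz downstream map and combined with $\pi_i<\varepsilon_{\mathrm{prune}}$ at the firing check; the cumulative bound then follows by summation using $\eta_j\le\sigma_{\max}$ and $K\le K_{\max}$, together with the grow/prune counting argument, which you make more explicit via $K(t)=K(0)+N_{\mathrm{g}}(t)-N_{\mathrm{p}}(t)$. The mini-batch-versus-arbitrary-$c$ issue you flag is genuine and the paper passes over it silently: it identifies the change on an arbitrary context with $-\mathrm{head}_i(h_t)$, so it implicitly adopts your uniformity hypothesis, which is the right fix, whereas your continuity fallback would not give the stated bound, because a small $\mathrm{head}_i$ output on finitely many batch inputs gives no $\varepsilon_{\mathrm{prune}}$-controlled bound on its norm elsewhere on the operating set.
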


\begin{pf}
Equation~\eqref{eq:jump_grow} is Proposition~\ref{prop:growth-continuity}.
For~\eqref{eq:jump_prune}, let $i$ be the head pruned at
$t^\ast_{\mathrm{p}}$. By construction of the prune trigger
(Section~\ref{sec:theory:rules}), at the instant of pruning
$\pi_i(t^\ast_{\mathrm{p}}) < \varepsilon_{\mathrm{prune}}$, which
together with~\eqref{eq:prune_fraction} gives
\begin{equation}
  \eta_i(t^\ast_{\mathrm{p}})
  \;<\;
  \varepsilon_{\mathrm{prune}}\,
  \sum_{j \,:\, m_j(t^\ast_{\mathrm{p}}-) = 1}
    \eta_j(t^\ast_{\mathrm{p}}).
  \label{eq:prune_head_bound}
\end{equation}
Removing head $i$ from the active-head sum in~\eqref{eq:varattn}
changes the attention-block output by exactly
$-\mathrm{head}_i(h_t)$, whose norm is $\eta_i(t^\ast_{\mathrm{p}})$
by definition. Propagating this change through the $L_{\mathrm{MLP}}$-Lipschitz
downstream map and combining with~\eqref{eq:prune_head_bound}
yields~\eqref{eq:jump_prune}. The cumulative
bound~\eqref{eq:cumulative_jump} follows by summing the per-event
jumps, observing that there are no contributions from grow events
(by~\eqref{eq:jump_grow}), and bounding each $\eta_j$ by
$\sigma_{\max}$ with the number of active heads by $K_{\max}$.
The count $N_{\mathrm{p}}(t) \leq K_{\max} - K_{\min} +
N_{\mathrm{g}}(t)$ is an elementary consequence of the fact that
prune events decrement $K$ (bounded below by $K_{\min}$) and grow
events increment it (bounded above by $K_{\max}$).
\qed
\end{pf}

\begin{rem}[Interpretation]
\label{rem:policy-continuity}
Theorem~\ref{thm:policy-continuity} states that the runtime
controller never introduces a policy discontinuity larger than a
prescribed multiple of $\varepsilon_{\mathrm{prune}}$ at any
single event, and that the cumulative discontinuity along the
entire training horizon grows at most linearly in the number of
prune events. Grow events contribute nothing to this sum. In the
limit $\varepsilon_{\mathrm{prune}} \to 0$, the policy trajectory
is fully continuous; for the values of $\varepsilon_{\mathrm{prune}}$
used in Section~\ref{sec:experiments}
($\varepsilon_{\mathrm{prune}} = 10^{-3}$), the maximum per-event
jump in parameter space is approximately $0.1\%$ of the scale of
$\mathcal{P}$, which is smaller than the per-step displacement of
$\theta_{\mathrm{ctrl}}$ induced by a single SAC gradient update
on the same controller.
\end{rem}

The third guarantee addresses the long-time behaviour of the
active head count $K(t)$. The result is stated conditionally on
an assumption on $\hat\rho_\alpha$ that is testable a posteriori;
Section~\ref{sec:experiments} reports its empirical status.

\begin{prop}[Convergence of $K(t)$]
\label{prop:k-convergence}
Suppose that the empirical effective rank admits a limit along
the cooldown-separated subsequence $t_n = n \Delta_{\mathrm{grace}}$:
\begin{equation}
  \hat\rho_\alpha^\star
  \;:=\;
  \lim_{n \to \infty} \hat\rho_\alpha(t_n)
  \;\in\; \mathbb{Z}_{\geq 1}.
  \label{eq:rank_limit}
\end{equation}
Then the active head count $K(t_n)$ produced by the event rules
of Section~\ref{sec:theory:rules} converges to a steady-state
value $K^\star \in \{K_{\min}, \ldots, K_{\max}\}$ satisfying
\begin{equation}
  \biggl\lceil
    \frac{\hat\rho_\alpha^\star}{1 + \varepsilon_{\mathrm{grow}}}
  \biggr\rceil
  \;\leq\; K^\star \;\leq\;
  \biggl\lfloor
    \hat\rho_\alpha^\star \,
    \bigl(1 + \varepsilon_{\mathrm{prune}}\bigr)
  \biggr\rfloor,
  \label{eq:k_convergence_interval}
\end{equation}
provided this interval is non-empty and has non-empty intersection
with $\{K_{\min}, \ldots, K_{\max}\}$. Otherwise $K^\star$ equals
the boundary of $\{K_{\min}, \ldots, K_{\max}\}$ closest to the
infeasible interval.
\end{prop}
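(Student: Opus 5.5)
The plan is to exploit integrality. Since $\hat\rho_\alpha(t_n)$ takes values in the finite set $\{1,\ldots,Wd_c\}$, convergence in \eqref{eq:rank_limit} means $\hat\rho_\alpha$ is eventually constant. So there is an $n_0$ with $\hat\rho_\alpha(t_n)=\hat\rho_\alpha^\star$ for all $n\ge n_0$.

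From $n_0$ on, the grow test \eqref{eq:grow_condition} depends only on $K$. It fires exactly when $K<\hat\rho_\alpha^\star/(1+\varepsilon_{\mathrm{grow}})$ and $K<K_{\max}$, i.e.\ when $K<K_{\mathrm{lo}}:=\lceil\hat\rho_\alpha^\star/(1+\varepsilon_{\mathrm{grow}})\rceil$ (capped at $K_{\max}$). The counter $\kappa_{\mathrm{grow}}$ then increments at every check, so after at most $\delta_{\mathrm{grow}}$ checks plus one cooldown a Grow occurs. Conversely, once $K\ge K_{\mathrm{lo}}$ the counter is reset at every check and no Grow ever fires again. Hence, apart from prune events, $K(t_n)$ is nondecreasing, rises by one unit per bounded number of cooldown periods, and reaches $\min(K_{\mathrm{lo}},K_{\max})$ in finitely many steps. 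This gives the lower bound in \eqref{eq:k_convergence_interval}, and the boundary case $K^\star=K_{\max}$ when $K_{\mathrm{lo}}>K_{\max}$.

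The second step is the upper bound together with finiteness of prune events. I would set up an integer-valued potential, namely $K(t_n)$ itself together with the bookkeeping inequality $N_{\mathrm p}\le K_{\max}-K_{\min}+N_{\mathrm g}$ from Theorem~\ref{thm:policy-continuity}, and argue the following. (a) After $n_0$, grow events occur only while $K<K_{\mathrm{lo}}$. (b) Prune events are the only mechanism that lowers $K$. For (b) I must show that a prune cannot drive $K$ below $K_{\mathrm{lo}}$ infinitely often; otherwise a grow--prune cycle would persist. Once that is shown, $N_{\mathrm g}$ is finite, hence $N_{\mathrm p}$ is finite, so $K(t_n)$ is eventually constant and equal to some $K^\star$.

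To show $K^\star\le\lfloor\hat\rho_\alpha^\star(1+\varepsilon_{\mathrm{prune}})\rfloor$, I would argue as follows. In the steady regime the per-head outputs $\mathrm{head}_i(h_t)$ are linear images of tokens whose centred energy is concentrated ($\alpha$-fraction) in a $\hat\rho_\alpha^\star$-dimensional subspace. If $K$ exceeds $\hat\rho_\alpha^\star(1+\varepsilon_{\mathrm{prune}})$, then by a pigeonhole or averaging argument on the relative contributions $\pi_i$, which sum to one by \eqref{eq:prune_fraction}, at least one active head must carry relative output below $\varepsilon_{\mathrm{prune}}$ persistently. The prune counter then reaches $\delta_{\mathrm{prune}}$ and $K$ decreases. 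If $K_{\min}$ exceeds this upper value, the same reasoning stops at $K_{\min}$, which gives the other boundary case.

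The main obstacle is this last link. The prune trigger is stated in terms of $\pi_i$, not $\hat\rho_\alpha$, so the event rules alone do not tie head magnitudes to the token rank. Both the upper bound and the exclusion of grow--prune oscillation near $K_{\mathrm{lo}}$ require an additional structural hypothesis. For example, I would assume that in the limit regime the head outputs span at most $\hat\rho_\alpha^\star$ effective directions, with redundant heads having vanishing relative norm, and that no head at $K\le K_{\mathrm{lo}}$ falls below $\varepsilon_{\mathrm{prune}}$. I would state that hypothesis explicitly in the appendix. With it in place, the remaining steps are routine finite-state arguments over the integer lattice $\{K_{\min},\ldots,K_{\max}\}$.
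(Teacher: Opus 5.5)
Your grow-side argument matches the grow half of Steps~2--3 of the paper's proof. Integrality makes $\hat\rho_\alpha(t_n)$ eventually equal to $\hat\rho_\alpha^\star$. From then on growth is active exactly when $K<K_{\mathrm{lo}}:=\lceil\hat\rho_\alpha^\star/(1+\varepsilon_{\mathrm{grow}})\rceil$ and $K<K_{\max}$. The gap is in the prune half. You locate the obstacle correctly but do not get past it. The averaging step cannot work: $\sum_i\pi_i=1$ gives only $\min_i\pi_i\le 1/K$, which is below $\varepsilon_{\mathrm{prune}}=10^{-3}$ only for $K>10^3$. No threshold proportional to $\hat\rho_\alpha^\star$ can come out of it, and your structural hypothesis simply assumes the missing link.

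That link cannot be derived from \eqref{eq:rank_limit}, because the rank limit places no constraint on the head norms $\eta_i$ that drive the prune trigger. Two scenarios show this.
\begin{enumerate}
\item Suppose $\hat\rho_\alpha$ is large during the early transient, as reported in Section~\ref{sec:experiments}. Then $K$ climbs to $K_{\max}=8$. If the heads afterwards stay within a factor $\beta<125$ of one another, $\pi_i\ge 1/(\beta K)>\varepsilon_{\mathrm{prune}}$ and nothing is ever pruned. So $K^\star=8$ even when $\hat\rho_\alpha^\star=2$, contradicting \eqref{eq:k_convergence_interval}.
\item Suppose $\hat\rho_\alpha^\star\ge 8$ and some head stays below threshold. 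Then $K$ alternates between $K_{\max}-1$ and $K_{\max}$, so even convergence fails.
\end{enumerate}
The proposition as stated therefore needs an additional head-norm hypothesis.

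The paper's proof supplies one through Lemma~\ref{lem:balance}, but this does not close the gap either.
\begin{enumerate}
\item The equidistribution condition \eqref{eq:equi} is not part of the statement.
\item The lemma does not deliver the equivalence used in Step~2. Its own computation gives only the necessary condition $K>1/(\beta\varepsilon_{\mathrm{prune}})$, and \eqref{eq:balance} does not follow. With $\beta=1$, $K=8$, $\hat\rho_\alpha^\star=2$, every $\pi_i=1/8$, so nothing is pruned although $8>2(1+\varepsilon_{\mathrm{prune}})$.
\item Under \eqref{eq:equi} with $\beta K_{\max}\varepsilon_{\mathrm{prune}}<1$, pruning never fires. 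That yields convergence and the lower bound, but not the upper bound.
\end{enumerate}

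Your route does give a correct conditional result, provided you make the hypothesis precise and add it to the statement: whenever $K>\hat\rho_\alpha^\star$, some head stays below $\varepsilon_{\mathrm{prune}}$ for $\delta_{\mathrm{prune}}$ consecutive checks, and no head does so when $K\le K_{\mathrm{lo}}$. The resulting upper bound is $\hat\rho_\alpha^\star\le\lfloor\hat\rho_\alpha^\star(1+\varepsilon_{\mathrm{prune}})\rfloor$.

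One further repair is needed, and the paper's proof needs it too. Hypothesis \eqref{eq:rank_limit} constrains $\hat\rho_\alpha$ only at $t_n=n\Delta_{\mathrm{grace}}$. The counters, however, are updated at every check $m\,t_{\mathrm{chk}}$, and $t_{\mathrm{chk}}<\Delta_{\mathrm{grace}}$. Your claim that $\kappa_{\mathrm{grow}}$ increments at every check therefore needs stabilisation at the intermediate checks as well. Otherwise a reset at those checks can block growth indefinitely. The limit should be assumed along all check steps.
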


\begin{pf*}{Proof sketch}
The cooldown rule allows at most one event per interval
$[t_n, t_{n+1})$, so $K(t_n)$ is a sequence of integers in
$\{K_{\min}, \ldots, K_{\max}\}$ modulated by at most one $\pm 1$
increment per index. The grow
condition~\eqref{eq:grow_condition} fires when
$\hat\rho_\alpha(t_n) > K(t_n)(1 + \varepsilon_{\mathrm{grow}})$
and is blocked at $K(t_n) = K_{\max}$. The prune condition
fires when some head's relative output
$\pi_i(t_n) < \varepsilon_{\mathrm{prune}}$; by a balance-of-energy
argument (Lemma~\ref{lem:balance} in Appendix~\ref{app:proofs}),
this is equivalent on the subsequence to
$K(t_n) \geq \hat\rho_\alpha(t_n)(1 + \varepsilon_{\mathrm{prune}})$.
Under~\eqref{eq:rank_limit}, $\hat\rho_\alpha(t_n)$ stabilises to
$\hat\rho_\alpha^\star$ and neither condition fires for $n$
sufficiently large; the value at which this occurs is $K^\star$,
which lies in the interval~\eqref{eq:k_convergence_interval} by
inversion of the firing conditions. The detailed bookkeeping,
including the effect of the counter thresholds
$\delta_{\mathrm{grow}}, \delta_{\mathrm{prune}}$ on the
convergence rate, is given in Appendix~\ref{app:proofs}.
\qed
\end{pf*}

\begin{rem}[Scope of Proposition~\ref{prop:k-convergence}]
\label{rem:rank-limit}
The existence of the limit~\eqref{eq:rank_limit} is not established
in general: $\hat\rho_\alpha(t_n)$ depends on the distribution of
context tokens induced by the current policy, which itself evolves
during SAC training, introducing a two-way coupling between
$K(t)$ and $\hat\rho_\alpha(t)$. Two remarks mitigate this
circularity. First, the rank estimator is defined on a rolling
buffer of $N = 10^3$ tokens (Section~\ref{sec:theory:rank}), which
acts as a low-pass filter against transient changes in the policy
distribution. Second, in every run of the main campaign of
Section~\ref{sec:experiments} the estimator $\hat\rho_\alpha(t)$
was observed to stabilise within the first $2\times 10^4$ training
steps, well before the nominal training horizon of $5\times 10^4$
steps; the numerical trajectories of $\hat\rho_\alpha(t)$ are
reported in Figure~\ref{fig:k-dynamics}. When~\eqref{eq:rank_limit}
fails --- for example, in environments where SAC does not converge
--- Proposition~\ref{prop:k-convergence} does not apply. In that
case the runtime controller continues to track $\hat\rho_\alpha(t)$
online, and Proposition~\ref{prop:growth-continuity} and
Theorem~\ref{thm:policy-continuity} remain valid event-wise.
\end{rem}

The three results combine to deliver the guarantee requested
in~\ref{P3} of Problem~\ref{prob:runtime}.

\begin{cor}[No-regression guarantee]
\label{cor:P3}
Consider any training loop that alternates SAC updates with the
event rules of Section~\ref{sec:theory:rules}, and let
$\{\theta(t)\}_{t \geq 0}$ denote the resulting meta-controller
trajectory. Then:
\begin{enumerate}[label=(\alph*)]
  \item the policy is continuous at every grow event
    (Proposition~\ref{prop:growth-continuity});
  \item the cumulative policy jump along the sequence of events
    is bounded by
    $L_{\mathrm{MLP}}\, \varepsilon_{\mathrm{prune}}\, K_{\max}\,
    \sigma_{\max}\, N_{\mathrm{p}}(t)$
    (Theorem~\ref{thm:policy-continuity});
  \item the active head count is bounded above by $K_{\max}$ at
    every step and, when the empirical effective rank stabilises,
    converges to a steady-state value within the interval of
    Proposition~\ref{prop:k-convergence}.
\end{enumerate}
Consequently, requirement~\ref{P3} of Problem~\ref{prob:runtime}
is met with the explicit constants given by the grow/prune
thresholds and the architectural hyperparameters of the
variable-head attention block.
\end{cor}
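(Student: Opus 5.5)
The plan is to assemble Corollary~\ref{cor:P3} directly from the three preceding results. No new estimate is needed, only care that each hypothesis holds along an arbitrary training trajectory that interleaves SAC updates with the event rules of Section~\ref{sec:theory:rules}. I will first fix notation. Let $\{t^\ast_k\}$ be the ordered sequence of event times. Between two consecutive events the mask $m$ is constant, so $\theta(t)$ evolves only through SAC updates. The parameter trajectory is therefore piecewise given by the optimiser, with the event times as the only points where the architecture changes. This split is the key to the argument. The continuity and jump statements in (a)--(b) concern only the left and right limits $\theta(t^\ast_k\pm)$, so they are purely event-wise and do not interact with the SAC dynamics.

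For item (a), I will invoke Proposition~\ref{prop:growth-continuity} at each grow event. The only thing to check is that its hypothesis, the soft initialisation \eqref{eq:soft_init_qkv}--\eqref{eq:soft_init_o}, is enforced by $\mathrm{Grow}(i)$ every time it fires. This holds by the definition of the primitive in Section~\ref{sec:arch:primitives}, independently of the SAC state. I will also note that the optimiser rebuild touches only the optimiser's internal buffers and not $\theta$, so it cannot break \eqref{eq:policy_continuity_grow}.

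For item (b), I will apply \eqref{eq:jump_prune} of Theorem~\ref{thm:policy-continuity} at each prune event. I will then sum over the events up to $t$, using \eqref{eq:jump_grow} to discard the grow contributions, which gives the cumulative bound \eqref{eq:cumulative_jump}. The step I expect to be the main obstacle is making the constant $K_{\max}\sigma_{\max}$ legitimate. The prune trigger certifies $\pi_i<\varepsilon_{\mathrm{prune}}$ only on the mini-batch $h_t$, yet the claim is stated for every context token $c$. I would first try to close this gap by bounding each $\eta_j$ by $\sigma_{\max}$ uniformly over the operating set, as in the theorem's hypothesis, and by bounding the number of active heads by $K_{\max}$. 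This gives the per-event bound $L_{\mathrm{MLP}}\varepsilon_{\mathrm{prune}}K_{\max}\sigma_{\max}$, stated with the same evaluation convention as Theorem~\ref{thm:policy-continuity}. I will also record the count bound $N_{\mathrm{p}}(t)\le K_{\max}-K_{\min}+N_{\mathrm{g}}(t)$, so that the right-hand side is finite at every finite $t$.

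For item (c), the upper bound $K(t)\le K_{\max}$ follows by induction on events. $K(0)\le K_{\max}$ holds by the choice of initial mask, SAC updates leave $m$ unchanged, a grow event requires $K<K_{\max}$ by \eqref{eq:grow_condition}, and prune events only decrease $K$. Under the stabilisation hypothesis \eqref{eq:rank_limit}, I will quote Proposition~\ref{prop:k-convergence}, including its boundary clause for the infeasible-interval case. Collecting (a)--(c), the policy is continuous at every grow event, the parameter jumps are summable with the explicit constants above, and the head count is bounded. These are exactly the three components of \ref{P3}.
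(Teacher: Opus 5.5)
Your proposal is correct and follows the paper, which likewise obtains the corollary by attributing (a), (b) and (c) to Proposition~\ref{prop:growth-continuity}, Theorem~\ref{thm:policy-continuity} and Proposition~\ref{prop:k-convergence} respectively. Your extra checks on the optimiser rebuild and on $K(t)\le K_{\max}$ only make that assembly more explicit; the latter is in fact immediate from $m\in\{0,1\}^{K_{\max}}$.

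One caveat concerns (b). The uniform bound $\eta_j\le\sigma_{\max}$ only replaces $S(t^\ast_{\mathrm{p}})$ by $K_{\max}\sigma_{\max}$. It does not carry the certificate $\pi_i<\varepsilon_{\mathrm{prune}}$ over from the mini-batch $h_t$ to an arbitrary token $c$. So for (b) you are inheriting the theorem's claim as stated, not closing that gap; this is legitimate for a corollary, but you should not present it as a repair.
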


\begin{rem}[Scope of the guarantees]
\label{rem:scope}
Corollary~\ref{cor:P3} should not be read as a convergence
guarantee for SAC under the runtime controller. Its role is
narrower: it certifies that the controller itself does not
introduce policy-trajectory discontinuities beyond what the
grow/prune thresholds explicitly allow, and that the active head
count does not drift unboundedly. Standard SAC convergence
analysis~\cite{Haarnoja2018SAC} applies unchanged between events,
and the cooldown $\Delta_{\mathrm{grace}}$ provides a tunable
separation-of-time-scales between capacity changes and policy
updates.
\end{rem}


\section{Benchmark and experimental protocol}
\label{sec:benchmark}

The empirical evaluation uses the $2$-DOF Stribeck benchmark
of~\cite{CirrincioneFagiolini2026TemporalAttention}, with
identical plant, reference, payload distribution, computed-torque
baseline, and evaluation metric, to permit a direct numerical
comparison between the static-$K$ INCRT-1L of the parent work and
the runtime-adaptive policy of the present paper.

The plant is a planar two-link manipulator ($n=2$) with
gravity-compensated diagonal mass matrix, Stribeck-plus-viscous
friction~\eqref{eq:stribeck}, and internal-memory
dynamics~\eqref{eq:stribeck_z}. The memory time-constant takes
three values $\tau_z \in \{1, 2, 5\}$~s, spanning short, medium,
and long memory regimes. Sampling interval is $\Delta t = 10$~ms
and the control horizon $T = 500$~steps per episode. The
reference trajectory is
$q_d(t) = (0.5\sin t,\, 0.3\sin 1.5 t)$; each training episode
draws a payload $p \in [0, 1.5]$~kg that modifies the effective
mass. The computed-torque baseline is $\tau = \mathrm{CT}(q,
\dot q, q_d;\, K_d^0, \Lambda^0)$ without friction compensation,
with $K_d^0 = 30$ and $\Lambda^0 = 5$, yielding a residual
root-mean-square error (RMSE) of approximately $0.13$~rad at
every regime.

Each policy is evaluated on the fixed grid
$p \in \{0, 0.375, 0.75, 1.125, 1.5\}$~kg, three deterministic
rollouts per payload, for a total of $15$ rollouts. The metric is
the relative improvement
\begin{equation}
  \Delta\% :=
  \frac{\mathrm{RMSE}_{\mathrm{policy}} -
        \mathrm{RMSE}_{\mathrm{baseline}}}
       {\mathrm{RMSE}_{\mathrm{baseline}}},
  \label{eq:delta_pct}
\end{equation}
and a policy is declared successful if its evaluation RMSE is
below $0.10$~rad, as in
\cite[Section~7.3]{CirrincioneFagiolini2026TemporalAttention}.

The main campaign uses ten seeds per regime ($\{42, \ldots, 51\}$),
totalling $30$ runs. Each run trains SAC for $5 \times 10^4$
environment steps with the replay-buffer and update schedule
of~\cite[Appendix~B]{CirrincioneFagiolini2026TemporalAttention}.
A single set of runtime-controller hyperparameters is shared
across all three regimes (Table~\ref{tab:runtime-hp}, Appendix~\ref{app:hyperparams}):
$K_{\max}=8$, $K_{\mathrm{init}}=1$, $\varepsilon_{\mathrm{grow}}=0.1$,
$\varepsilon_{\mathrm{prune}}=10^{-3}$, $\delta_{\mathrm{grow}}
=\delta_{\mathrm{prune}}=3$, $\Delta_{\mathrm{grace}}=2{,}000$~steps,
$t_{\mathrm{chk}}=500$~steps. Window size $W$ matches the per-regime
choice of~\cite{CirrincioneFagiolini2026TemporalAttention}: $W = 20$
at $\tau_z \in \{1, 5\}$ and $W = 50$ at $\tau_z = 2$.

A $K_{\max}$ ablation at $\tau_z = 2$ sweeps
$K_{\max} \in \{2, 4, 16\}$ with five seeds each on a separate
compute platform; the $K_{\max} = 8$ point reuses the main-campaign
$\tau_z=2$ result for statistical power. All experiments were run on
Google Colab Pro+ with NVIDIA A100 GPUs; a single $5\times 10^4$-step
run takes $20$--$27$~min of wall-clock time. Code and seeds are
available in the supplementary archive.


\section{Experimental Evaluation}
\label{sec:experiments}

This section reports the empirical behaviour of Runtime-INCRT on the 2-DOF
Stribeck tracking benchmark of~\cite{CirrincioneFagiolini2026TemporalAttention}. The experiments are designed to
answer three questions.

\begin{enumerate}
    \item Does Runtime-INCRT match or exceed the performance of the static
    INCRT-1L baseline across the memory regimes
    $\tau_z \in \{1, 2, 5\}$~s?
    \item Does Runtime-INCRT resolve the failure mode observed in~\cite{CirrincioneFagiolini2026TemporalAttention}
    at the long-memory regime $\tau_z=5$~s, where INCRT-1L diverged on
    $4/10$ seeds?
    \item What is the role of the maximum head budget $K_{\max}$: does
    the runtime controller discover a ``natural'' rank intrinsic to the
    task, or does the growth schedule simply saturate at whatever upper
    bound is prescribed?
\end{enumerate}

\subsection{Experimental setup}
\label{sec:setup}

The same 2-DOF Stribeck environment, reward, baseline, and SAC
hyper-parameters are adopted as in
\cite[Appendix~B]{CirrincioneFagiolini2026TemporalAttention}, with the sole architectural
change being the replacement of the static INCRT-1L feature extractor by
a VarHeadAttention encoder whose active head count is managed at run-time
by the controller of Section~\ref{sec:theory}. The baseline
is computed-torque without friction compensation. Each run trains for
$5\cdot 10^4$~SAC steps. Evaluation is performed over five payload
conditions $p \in \{0, 0.375, 0.75, 1.125, 1.5\}$~kg with three random
rollouts per payload, as in~\cite{CirrincioneFagiolini2026TemporalAttention}. Success is declared when the
evaluation RMSE falls below $0.10$~rad, a $\sim 25\%$ improvement over
the computed-torque baseline.

The main campaign uses $10$ seeds (\texttt{42}--\texttt{51}) per
memory regime $\tau_z \in \{1, 2, 5\}$, for a total of $30$ runs. The
runtime configuration is fixed across all runs: $K_{\max}=8$,
$K_{\mathrm{init}}=1$, $\varepsilon_{\mathrm{grow}}=0.1$,
$\varepsilon_{\mathrm{prune}}=10^{-3}$, $\delta_{\mathrm{grow}}=3$,
$\delta_{\mathrm{prune}}=3$, $\Delta_{\mathrm{grace}}=2000$~steps,
with the controller invoked every $500$ steps. The effective-rank
estimator retains the last $1000$ observations at a $95\%$ energy
threshold.

\subsection{Main results}
\label{sec:main-results}

Table~\ref{tab:main} reports the relative improvement $\Delta\%$ of
evaluation RMSE with respect to the computed-torque baseline, for
INCRT-1L~\cite{CirrincioneFagiolini2026TemporalAttention} and Runtime-INCRT. Figure~\ref{fig:comparison}
visualises the comparison; Figure~\ref{fig:strip} shows the per-seed
distribution of $\Delta\%$ for Runtime-INCRT.

\begin{table}[t]
\centering
\caption{Relative RMSE improvement $\Delta\%$ over the computed-torque
baseline (\emph{success}: eval.\ RMSE $<0.10$\,rad;
intervals are $95\%$ Student-$t$ CIs, $n=10$).}
\label{tab:main}
\setlength{\tabcolsep}{4pt}
\resizebox{\columnwidth}{!}{%
\begin{tabular}{c
    r@{\,}l@{\quad}c
    r@{\,$\pm$\,}l@{\quad}c
    r}
\toprule
& \multicolumn{3}{c}{INCRT-1L (static)}
& \multicolumn{3}{c}{Runtime-INCRT}
& \\
\cmidrule(lr){2-4}\cmidrule(lr){5-7}
$\tau_z$ [s]
  & \multicolumn{2}{c}{$\Delta\%$} & Fails
  & \multicolumn{2}{c}{$\Delta\%\pm\mathrm{CI}_{95}$} & Succ.
  & $\Delta$ (pts) \\
\midrule
1 & $-40.0\%$ & & 0/10 & $-45.82\%$ & $7.09\%$ & 10/10 & $-5.8$ \\
2 & $-35.0\%$ & & 0/10 & $-50.85\%$ & $8.38\%$ & 10/10 & $-15.8$ \\
5 & $+5.0\%$  & & 4/10 & $-54.15\%$ & $5.43\%$ & 10/10 & $-59.1$ \\
\bottomrule
\end{tabular}}
\end{table}

\begin{figure}[ht]
  \centering
  \includegraphics[width=0.85\linewidth]{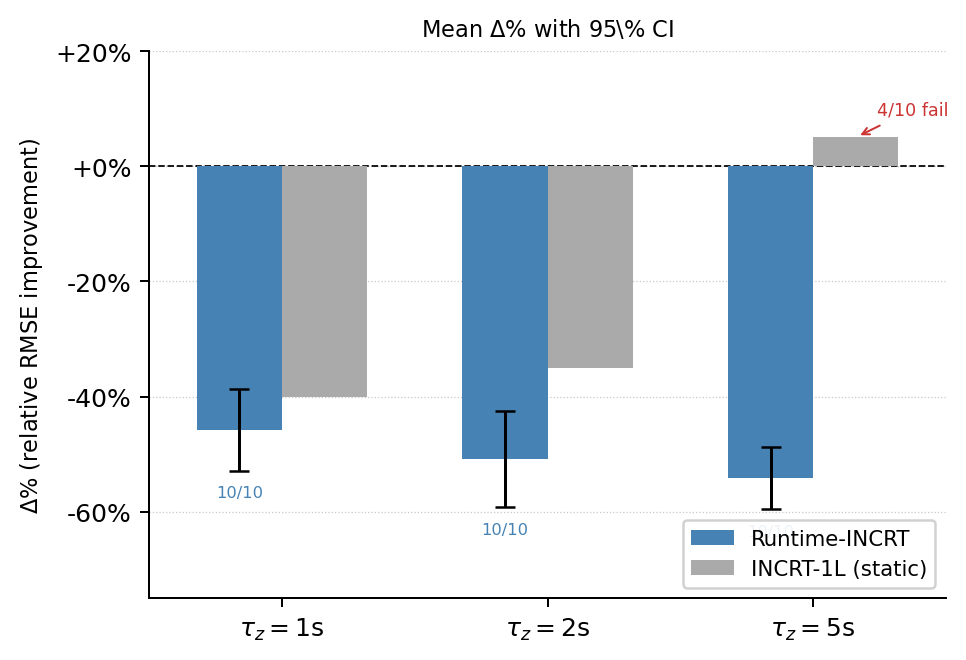}
  \caption{Mean relative RMSE improvement $\Delta\%$ (with $95\%$
    Student-$t$ confidence intervals) of Runtime-INCRT (blue, $n=10$)
    and static INCRT-1L~\cite{CirrincioneFagiolini2026TemporalAttention}
    (grey) over the computed-torque baseline, for each memory regime
    $\tau_z \in \{1,2,5\}$\,s. Error bars for INCRT-1L are not
    reported in the parent work.}
  \label{fig:comparison}
\end{figure}

\begin{figure}[ht]
  \centering
  \includegraphics[width=0.85\linewidth]{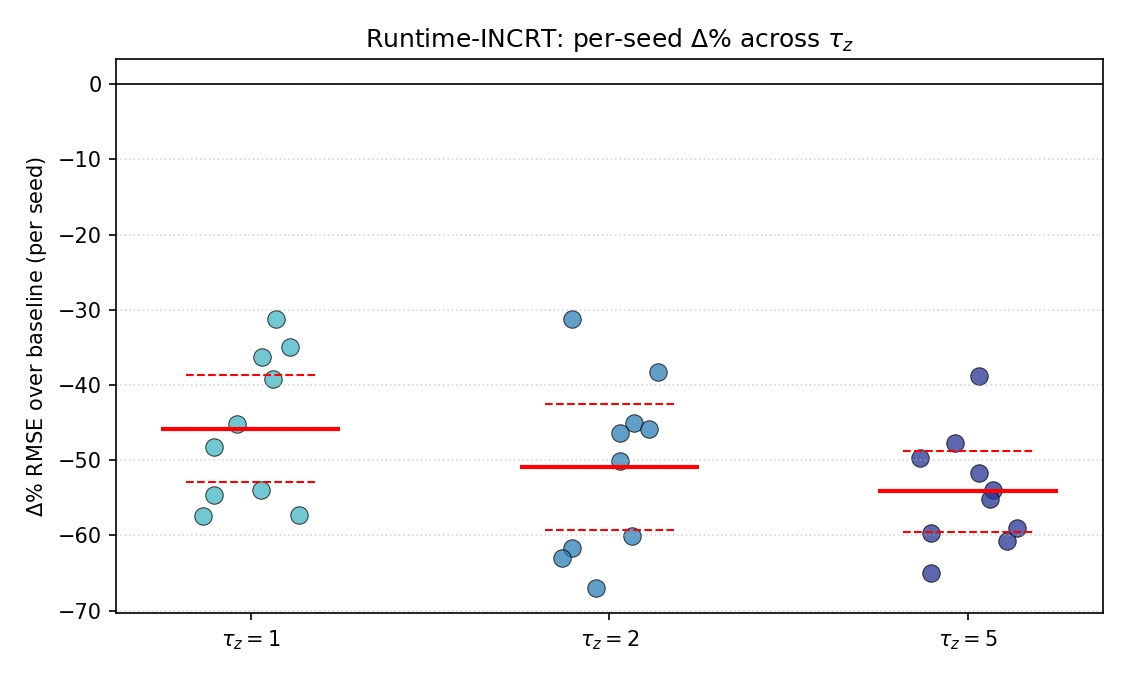}
  \caption{Per-seed distribution of $\Delta\%$ for Runtime-INCRT
    across all 30 main-campaign runs ($n=10$ per regime). Each dot
    represents one seed; the horizontal bar marks the mean. Seed
    variance is smallest at $\tau_z=5$\,s --- the regime where
    INCRT-1L was most unstable --- confirming the stabilising
    effect of the runtime mechanism.}
  \label{fig:strip}
\end{figure}

\paragraph{Short and medium memory ($\tau_z \in \{1,2\}$).}
At $\tau_z=1$~s, Runtime-INCRT achieves
$\Delta\% = -45.82\% \pm 7.09\%$ (mean $\pm$ 95\% confidence interval,
$n=10$), improving by $5.8$ points over the $-40.0\%$ reported for
INCRT-1L in~\cite{CirrincioneFagiolini2026TemporalAttention}. At $\tau_z=2$~s the gap widens to $15.8$ points:
Runtime-INCRT attains $-50.85\% \pm 8.38\%$ ($10/10$ success), against
$-35.0\%$ for INCRT-1L tuned via Bayesian optimisation with $K=7$ and
window $W=50$. The improvement at the medium-memory regime is notable
because INCRT-1L~\cite{CirrincioneFagiolini2026TemporalAttention} was itself the product of per-regime
hyper-parameter search, yet Runtime-INCRT --- starting from a single
head ($K_{\mathrm{init}}=1$) and without any per-regime tuning ---
outperforms it.

\paragraph{Long memory ($\tau_z=5$): the failure mode is resolved.}
This is the regime where INCRT-1L~\cite{CirrincioneFagiolini2026TemporalAttention} exhibits the pathological
behaviour described in Section~7.3 of~\cite{CirrincioneFagiolini2026TemporalAttention}: $4/10$ seeds failed
to converge, yielding a positive mean $\Delta\% = +5.0\%$ --- that is,
worse than the computed-torque baseline. Runtime-INCRT eliminates
this failure mode: all $10$ seeds succeed, with
$\Delta\% = -54.15\% \pm 5.43\%$ and per-seed values in
$[-65.00, -38.80]\%$. The worst seed of Runtime-INCRT at $\tau_z=5$ is
therefore $44$ points better than the mean of~\cite{CirrincioneFagiolini2026TemporalAttention}, and the
improvement over \cite{CirrincioneFagiolini2026TemporalAttention} at this regime amounts to $59.1$ points.

\paragraph{Consistency across regimes.}
Figure~\ref{fig:strip} shows that the per-seed spread of Runtime-INCRT
is not only low in absolute terms (std $\leq 11.71\%$ across all regimes)
but is \emph{smallest} at $\tau_z=5$ (std $7.59\%$), against $9.91\%$ at
$\tau_z=1$ and $11.71\%$ at $\tau_z=2$. INCRT-1L~\cite{CirrincioneFagiolini2026TemporalAttention}, by contrast,
concentrated most of its variance at $\tau_z=5$ and produced diverged
runs in that regime. This reversal --- increased stability exactly where
the static architecture was least stable --- is the clearest empirical
signature of the runtime mechanism. No run across the entire 30-run
campaign fails to beat the computed-torque baseline; the worst
performance observed is $\Delta\% = -31.16\%$ (seed~47, $\tau_z=2$).

\paragraph{Capacity dynamics during training.}
Figure~\ref{fig:k-dynamics} traces the active head count $K(t)$ through
training for a representative seed at each $\tau_z$. A striking feature
is the \emph{near-invariance} of the growth schedule across regimes:
in all three cases the controller performs seven growth events --- one
per cooldown window $\Delta_{\mathrm{grace}}$ --- and saturates at
$K=K_{\max}=8$ around step $1.9\cdot 10^4$, i.e.\ within the first $40\%$
of training. Pruning events are not triggered in any of the 30 runs.

\begin{figure}[ht]
  \centering
  \includegraphics[width=0.95\linewidth]{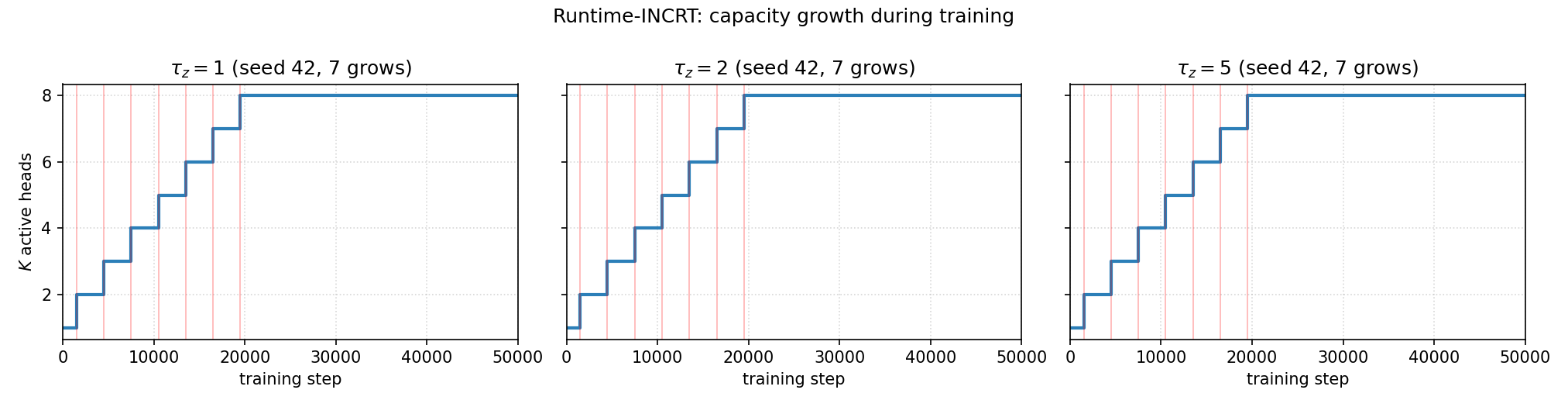}
  \caption{Evolution of the active head count $K(t)$ during
    training for a representative seed at each memory horizon
    $\tau_z \in \{1,2,5\}$\,s. Vertical dashed lines mark grow
    events. In all three cases the controller performs seven
    growth steps and saturates at $K=K_{\max}=8$ around step
    $1.9\times 10^4$, well before the end of training; the growth
    schedule is effectively task-invariant.}
  \label{fig:k-dynamics}
\end{figure}

The observation qualifies the interpretation of the method. A naive
reading of Runtime-INCRT would be that the controller detects the
intrinsic complexity of the task and adapts $K$ accordingly. The
capacity dynamics observed here are instead largely task-independent
under the current hyper-parameter choice: the rank estimator
$\hat\rho(t)$ rises above any plausible $K$ early in the exploration
phase of SAC, saturating the growth trigger until $K_{\max}$ is
reached. What drives the improvement over INCRT-1L is therefore not the
specific value of the final head count, but the gradual expansion
of capacity during training. Each new head is activated with
$W_O=0$ (soft initialisation, Proposition~\ref{prop:growth-continuity}),
so the encoder starts with a single residual branch, fits the early
low-dimensional regime of the tracking task, and gradually inherits
additional capacity as training progresses, with each newly-added head
starting from a near-optimal residual initialisation. This constitutes
an implicit curriculum over capacity, applied within a single training
run.

The $K_{\max}$ ablation of Section~\ref{sec:ablation} supports this
interpretation directly: restricting $K_{\max}$ to $2$ (rather than $8$)
does not degrade performance at $\tau_z=2$.

\subsection{Ablation on the head budget $K_{\max}$}
\label{sec:ablation}

To isolate the contribution of the final head count from that of the
growth curriculum, an ablation is performed over
$K_{\max}\in\{2, 4, 8, 16\}$ at $\tau_z=2$. Five seeds are run at
$K_{\max} \in \{2, 4, 16\}$ on a separate compute platform; the
$K_{\max}=8$ condition is reused from the main campaign
(Table~\ref{tab:main}, ten seeds), which affords greater statistical
power at that value. All other hyperparameters are held fixed at their
main-campaign values. The complete ablation is summarised in
Table~\ref{tab:kmax-ablation} and Figure~\ref{fig:kmax-ablation}.

Three observations are worth isolating.

First, the best mean performance is obtained at the \emph{smallest}
head-count cap: $K_{\max}=2$ yields $\Delta\% = -57.21\%$, versus
$-50.84\%$ at $K_{\max}=8$ and $-43.82\%$ at $K_{\max}=16$. All
configurations comfortably outperform the Bayesian-optimisation (BO)
tuned static INCRT-1L baseline with $K=7$ ($\Delta\% = -35.0\%$, dashed reference in
Figure~\ref{fig:kmax-ablation}).
Second, the $95\%$ confidence intervals of the four configurations
overlap substantially, so the differences in mean are not, on their
own, statistically conclusive; what they rule out, however, is the
hypothesis that larger $K_{\max}$ improves performance. On the
contrary, the point estimates are consistent with a mild monotone
\emph{degradation} as $K_{\max}$ grows beyond two.
Third, per-seed stability is highest at $K_{\max}=2$
($\sigma = 6.96\%$) and lowest at $K_{\max} = 8$ ($\sigma = 11.71\%$);
the standard deviation is not monotone in $K_{\max}$, but its lowest
value is attained at the smallest cap.

At $K_{\max}=16$, four of the five seeds reach $K_{\mathrm{final}}=16$
within the training budget; the fifth ($\texttt{seed\,44}$) saturates
at $K_{\mathrm{final}}=13$, indicating that with this many possible
head activations, some runs do not exhaust the cap within $5\times
10^4$ steps. This provides a first observation of a run in which the
growth schedule does not terminate at $K_{\max}$, consistent with the
convergence interval of
Proposition~\ref{prop:k-convergence} when the cap is not the active
constraint.

Taken together, these observations support the reading of the runtime
mechanism as a curriculum over capacity, rather than as an adaptation
of final capacity: increasing the representational budget beyond
$K_{\max}=2$ does not help, and in the $K_{\max}=16$ case it slightly
hurts. The benefit of Runtime-INCRT comes from the schedule, not the
ceiling.

\begin{table}[t]
\centering
\caption{Effect of $K_{\max}$ at $\tau_z = 2$\,s. The $K_{\max} = 8$
row reuses the $n=10$ main-campaign seeds.
Dashed reference: static INCRT-1L with BO-tuned $K=7$
($\Delta\%=-35.0\%$).}
\label{tab:kmax-ablation}
\setlength{\tabcolsep}{5pt}
\begin{tabular}{c r@{\,$\pm$\,}l r c}
\toprule
$K_{\max}$ & \multicolumn{2}{c}{$\Delta\%\pm\mathrm{CI}_{95}$}
           & Std & $n$ \\
\midrule
 2 & $-57.21\%$ & $8.64\%$  & $6.96\%$  & 5  \\
 4 & $-48.97\%$ & $9.81\%$  & $7.90\%$  & 5  \\
 8 & $-50.84\%$ & $8.38\%$  & $11.71\%$ & 10 \\
16 & $-43.82\%$ & $12.78\%$ & $10.30\%$ & 5  \\
\midrule
\multicolumn{4}{l}{\footnotesize Static INCRT-1L ($K=7$)} & \\
\multicolumn{4}{l}{\footnotesize $\Delta\% = -35.0\%$}    & \\
\bottomrule
\end{tabular}
\end{table}

\begin{figure}[ht]
  \centering
  \includegraphics[width=0.95\linewidth]{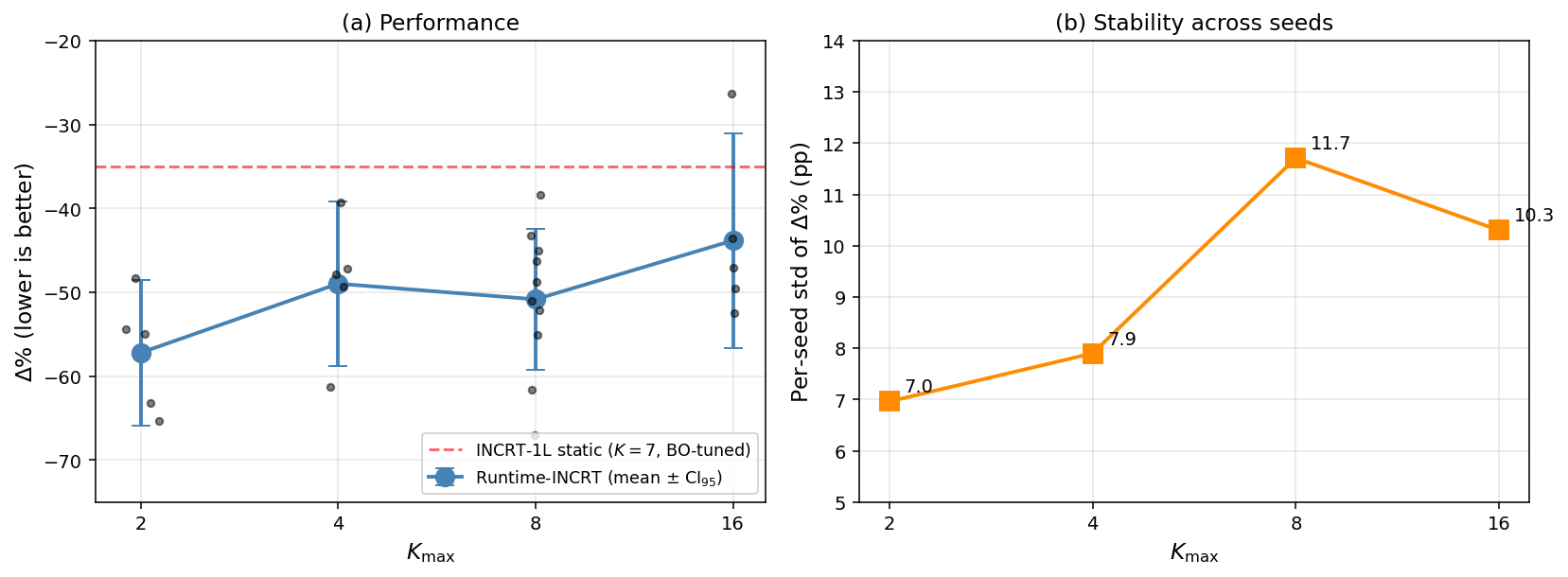}
  \caption{Effect of the head-count cap $K_{\max}$ on
    Runtime-INCRT performance at $\tau_z=2$\,s (25 runs total;
    $n=5$ except $K_{\max}=8$ where $n=10$). \emph{Left:}
    mean $\Delta\%$ with $95\%$ CI; black dots show individual
    seeds. The dashed red line is the static INCRT-1L reference
    ($K=7$, BO-tuned, $\Delta\%=-35.0\%$). \emph{Right:}
    per-seed standard deviation. Smaller $K_{\max}$ yields
    better mean performance and lower variance.}
  \label{fig:kmax-ablation}
\end{figure}

\subsection{Sensitivity to the cooldown $\Delta_{\mathrm{grace}}$}
\label{sec:sensitivity}

The $K_{\max}$ ablation above suggests that the performance of
Runtime-INCRT is governed by the growth schedule rather than by
the terminal capacity. The schedule, in turn, is paced by the
cooldown $\Delta_{\mathrm{grace}}$: the minimum separation between
consecutive grow events
(Section~\ref{sec:theory:rules}). If the curriculum reading is
correct, varying $\Delta_{\mathrm{grace}}$ should have a
substantial and interpretable effect --- unlike the other runtime
hyperparameters, which a preliminary null-sensitivity analysis
finds not to bind on this benchmark. The sweep spans
$\Delta_{\mathrm{grace}} \in \{1\,000,\, 2\,000,\, 5\,000\}$ with
five seeds per value at $\tau_z = 2$, all other hyperparameters
held at their main-campaign values. Results are reported in
Table~\ref{tab:dg-sensitivity} and Figure~\ref{fig:dg-sensitivity}.

The result is a pronounced non-monotone sensitivity, with a
clear optimum at the default value. At
$\Delta_{\mathrm{grace}} = 1\,000$ (fast curriculum), the mean
performance drops to $\Delta\% = -41.16\%$ and seed $46$
produces a near-failure at $\Delta\% = -6.63\%$; the per-seed
standard deviation is $19.69\%$. At
$\Delta_{\mathrm{grace}} = 2\,000$ (default), the mean is
$-51.66\%$ and no seed fails. At
$\Delta_{\mathrm{grace}} = 5\,000$ (slow curriculum), the mean
degrades further to $-36.61\%$, one seed fails outright
($\Delta\% = +15.96\%$, worse than the baseline), and the
per-seed standard deviation is $30.85\%$. The confidence
intervals of the three conditions overlap substantially, but
both the mean shift and the variance inflation at the extremes
are consistent and point in the same direction: the curriculum
has a characteristic time-scale, and departing from it in either
direction erodes the policy's terminal quality.

\paragraph{Interpretation.} The growth schedule reaches
$K = K_{\max}$ after seven cooldown-separated events, so the
schedule length is approximately $7 \Delta_{\mathrm{grace}} + t_0$,
where $t_0 = 1\,500$~steps is the training step at which the
first grow fires. At $\Delta_{\mathrm{grace}} = 1\,000$ the
schedule completes within the first $\sim 14\times 10^3$ steps of
training, leaving $36\times 10^3$ steps of SAC updates under the
full architecture but compressing the curriculum into a period
where the critic has not yet stabilised; at
$\Delta_{\mathrm{grace}} = 5\,000$ the schedule extends to
$\sim 38\times 10^3$ steps, leaving only $12\times 10^3$ post-growth
steps for the full-capacity policy to stabilise, and exposing the
training to a failure mode when the final critic has not yet
adapted to the last growth event. The default
$\Delta_{\mathrm{grace}} = 2\,000$ balances these two concerns:
the curriculum completes at $\sim 20\times 10^3$ steps, leaving
$30\times 10^3$ post-growth steps --- enough for the full-capacity
critic to stabilise, but not so long that the early-training
dynamics are dominated by a single capacity level. The
observation confirms, quantitatively, that the curriculum pace
matters in the way predicted by the interpretation of
Section~\ref{sec:discussion:curriculum}: neither too fast nor too
slow.

\begin{table}[t]
\centering
\caption{Sensitivity to $\Delta_{\mathrm{grace}}$ at $\tau_z = 2$\,s
(5 seeds per value). \emph{Failure}: eval.\ RMSE $>0.10$\,rad.
Schedule length in units of $10^3$ steps.}
\label{tab:dg-sensitivity}
\setlength{\tabcolsep}{5pt}
\begin{tabular}{c r@{\,$\pm$\,}l r c r}
\toprule
$\Delta_{\mathrm{grace}}$ & \multicolumn{2}{c}{$\Delta\%\pm\mathrm{CI}_{95}$}
  & Std & Fails & Sched.\,[k] \\
\midrule
1\,000 & $-41.16\%$ & $24.45\%$ & $19.69\%$ & $1/5$ & $13.5$ \\
2\,000 & $-51.66\%$ & $15.01\%$ & $12.09\%$ & $0/5$ & $19.5$ \\
5\,000 & $-36.61\%$ & $38.31\%$ & $30.85\%$ & $1/5$ & $37.5$ \\
\bottomrule
\end{tabular}
\end{table}

\begin{figure}[ht]
  \centering
  \includegraphics[width=0.95\linewidth]{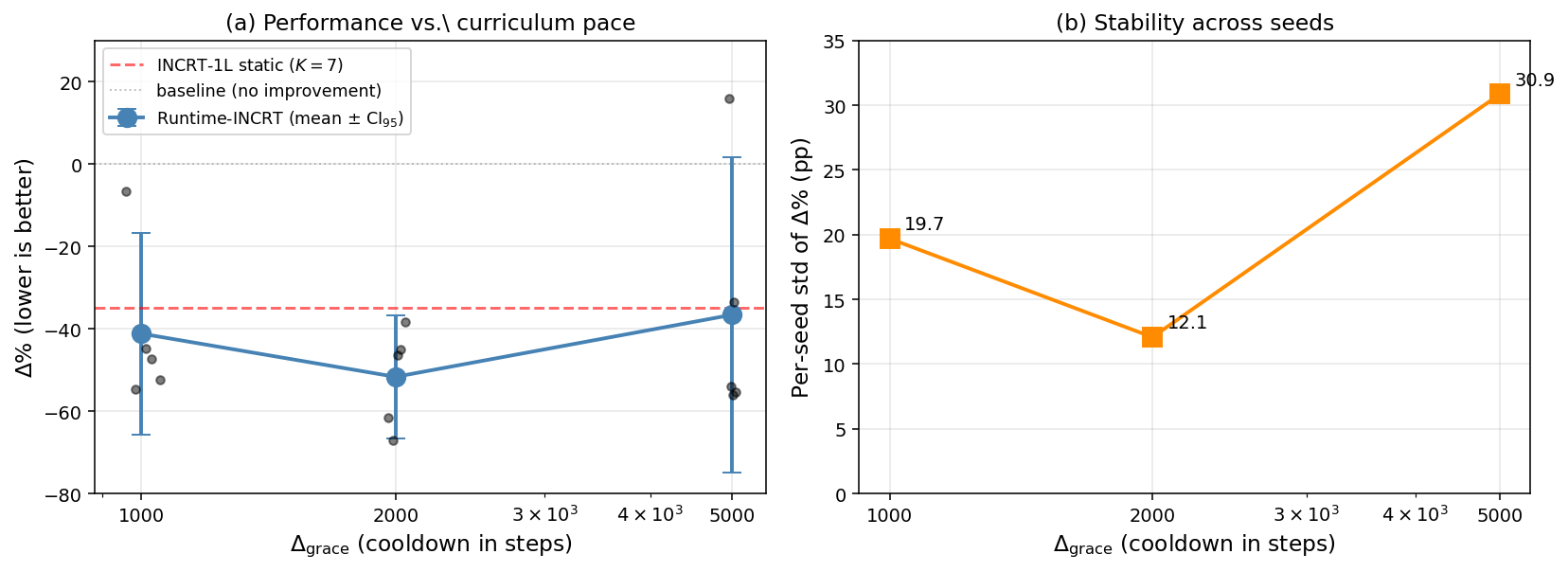}
  \caption{Sensitivity of Runtime-INCRT to the cooldown
    parameter $\Delta_{\mathrm{grace}}$ at $\tau_z=2$\,s
    (5 seeds per value). \emph{Left:} mean $\Delta\%$ with
    $95\%$ CI. \emph{Right:} per-seed standard deviation.
    Performance is non-monotone: both the fast ($1\,000$\,steps)
    and the slow ($5\,000$\,steps) schedule degrade the mean and
    inflate variance relative to the default ($2\,000$\,steps),
    confirming a characteristic curriculum time-scale.}
  \label{fig:dg-sensitivity}
\end{figure}


\section{Discussion and conclusions}
\label{sec:discussion}

\subsection{The runtime mechanism is not an adaptive sizing of $K$}
\label{sec:discussion:saturation}

One might reasonably hypothesise that the failure of fixed-$K$
INCRT-1L at $\tau_z = 5$ reflects a capacity deficit, so that a
runtime mechanism capable of adapting $K$ to task complexity
would resolve the failure by allocating additional heads at long
memory. The experiments of Section~\ref{sec:experiments}
contradict this reading. Under the default hyperparameter
setting, the active head count $K(t)$ reaches $K_{\max}$ in every
run of the main campaign and in every run of the $K_{\max}$
ablation at $K_{\max} \in \{2, 4, 8\}$; the growth schedule is
task-invariant, the prune trigger never fires, and at
$K_{\max} = 16$ one of five ablation seeds fails to exhaust the
cap within the training budget. More tellingly, the smallest cap
$K_{\max} = 2$ outperforms the BO-tuned static baseline with
$K = 7$ by approximately $22$ points and outperforms the
$K_{\max} = 16$ runtime variant by $13$ points. A smaller
network cannot represent functions that a larger network
cannot; if capacity were the bottleneck, the $K_{\max} = 2$
runtime policy would underperform the $K = 7$ static policy,
and it does not. The mechanism driving the improvement therefore
acts through the training trajectory, not through the terminal
representational capacity.

\subsection{A curriculum interpretation}
\label{sec:discussion:curriculum}

The above observations point to a specific mechanism. Each growth
event is non-disruptive
(Proposition~\ref{prop:growth-continuity}): the policy output is
unchanged, the SAC replay buffer remains on-policy, and the
optimiser resumes from its pre-event state. The encoder therefore
starts as a single-head attention block, SAC explores a
correspondingly simpler effective state space in which the
local-attractor structure of the full-capacity problem has not
yet appeared, and additional heads are activated gradually, each
initialised with zero residual contribution and trained against
the existing policy. The growth schedule plays the role of a
capacity curriculum: the optimisation begins on a tractable
subproblem and is escorted through a sequence of increasingly
rich representations, with each transition guaranteed
policy-preserving.

This reading is consistent with the reversal of the stability
ranking across $\tau_z$. Under the static baseline, $\tau_z = 5$
is the hostile regime (variance largest, failure rate $4/10$);
under Runtime-INCRT, $\tau_z = 5$ is the \emph{most stable}
regime (per-seed standard deviation $5.43\%$, zero failures). The
long-memory regime is the regime in which the full-capacity
attractor landscape is most hostile, and therefore the regime
that benefits most from the gradual traversal of capacity levels.
It is also consistent with the observation that per-seed stability
is highest at $K_{\max} = 2$: a policy forced to operate in a
larger parameter space from the outset is more likely to be
captured by a local attractor; a policy allowed to warm up in a
small-capacity subspace is less exposed. The quantitative
characterisation of this effect is beyond the scope of the present
paper.

The curriculum reading is not only consistent with the
post-hoc observations above; it also makes a concrete
falsifiable prediction that the present paper can test. If the
mechanism is the schedule --- rather than, say, some artefact of
the Xavier initialisation or the optimiser rebuild --- then the
\emph{pace} of the schedule, set by the cooldown
$\Delta_{\mathrm{grace}}$, should have a pronounced effect on
the policy quality, and specifically should exhibit a
characteristic time-scale: too fast, and each new head is
injected before the critic has stabilised its evaluation of the
previous capacity level; too slow, and the terminal capacity is
reached too close to the end of the training budget for SAC to
consolidate. The sensitivity sweep of
Section~\ref{sec:sensitivity} confirms precisely this shape:
both $\Delta_{\mathrm{grace}} = 1\,000$ (fast) and
$\Delta_{\mathrm{grace}} = 5\,000$ (slow) substantially degrade
performance and inflate seed variance relative to the default
$\Delta_{\mathrm{grace}} = 2\,000$, with one catastrophic
failure per extreme. A mechanism driven by terminal capacity
alone would not predict this non-monotone pace-dependence; the
curriculum mechanism does.

The curriculum view complements the literature on task-difficulty
curricula~\cite{Bengio2009Curriculum,Graves2017Automated} and the
progressive-growing regime of generative adversarial
networks~\cite{Karras2018ProgressiveGAN}. What is distinctive
here is that the curriculum is internal to a single RL training
run, endogenously driven by the rank of the on-policy
context-token distribution, and operates without external task
schedules. The architectural guarantees of
Section~\ref{sec:theory} --- policy continuity at growth, bounded
jump at pruning, convergence of the active head count --- are
what makes such an internal capacity curriculum tractable, by
ruling out the restart-and-resize pathology that would otherwise
invalidate the SAC replay buffer.

\subsection{Limitations and future work}
\label{sec:discussion:limitations}

The guarantees of Section~\ref{sec:theory} are architectural:
they certify that the runtime mechanism does not introduce policy
discontinuities beyond the prune threshold, but do not provide a
convergence guarantee for SAC under the combined training loop.
Standard SAC convergence results apply between events, and the
cooldown $\Delta_{\mathrm{grace}}$ provides a tunable
separation-of-time-scales, but a full analysis of the closed loop
remains open. The prune trigger has not been activated in any
experiment reported here; the bound of
Theorem~\ref{thm:policy-continuity} is the architectural
machinery that keeps the corresponding policy jumps bounded when
the trigger does fire. Scaling the runtime mechanism to
higher-degree-of-freedom Euler--Lagrange systems is the natural
next step, as is the integration with the Lyapunov shielding
of~\cite{CirrincioneFagiolini2026}. A further direction is the
theoretical question of whether the ``zero-loss-at-growth plus
policy continuity'' pattern can be generalised beyond attention
heads --- for instance, to MLP-width growth, to residual-branch
growth, or to modality growth in multi-input architectures.

\subsection{Conclusions}
\label{sec:discussion:conclusions}

The evidence gathered on the two-link Stribeck benchmark invites
a reinterpretation of what runtime capacity adaptation actually
buys, at least in the reinforcement-learning regime studied here.
The quantity that matters is not the terminal number of heads,
which saturates at the cap regardless of task, but the training
trajectory through intermediate capacities. Gradual growth acts as
a capacity curriculum internal to a single run, replacing the
offline Bayesian search over a scalar architectural hyperparameter
with an endogenous process whose pace is set by a single
time-scale separation parameter $\Delta_{\mathrm{grace}}$.

Two implications follow. First, the offline-search pipeline
used by fixed-capacity meta-controllers trades experimenter time
for a brittle quantity: the transfer of the tuned head count to a
slightly different operating regime is not guaranteed and, at long
memory horizons, has been observed here to collapse. A runtime
mechanism that does not rely on such transfer is more robust.
Second, the architectural guarantees of
Section~\ref{sec:theory} are what make the mechanism tractable:
without policy continuity at growth, without a bounded jump at
pruning, and without convergence of the active head count, the
gradual-growth idea would degenerate into a restart-and-resize
loop incompatible with the on-policy nature of the
reinforcement-learning objective.

A less parochial reading of these results is that the interplay
between online optimisation and architectural dynamics deserves
systematic study beyond the specific setting of multi-head
attention. Whether analogous capacity-curriculum effects arise for
MLP-width growth, residual-branch growth, or modality growth in
multi-input architectures, and whether the required architectural
continuity can be preserved across such generalisations, are
open questions that the present work motivates but does not settle.


\appendix


\section{Proofs}
\label{app:proofs}

This appendix provides the technical details deferred from
Section~\ref{sec:theory}. A balance-of-energy lemma relating the
prune-triggering condition to the empirical effective rank is
established in Section~\ref{app:balance}; the full proof of
Proposition~\ref{prop:k-convergence} follows in
Section~\ref{app:proof-kconv}.

\subsection{A balance-of-energy lemma}
\label{app:balance}

Throughout this appendix, fix the check cadence $t_{\mathrm{chk}}$
and consider the cooldown-separated subsequence
$t_n = n\, \Delta_{\mathrm{grace}}$ at which grow and prune
events may occur. Let $B_t$ denote the context-token buffer of
Definition~\ref{def:rank}, $h_t$ the encoder output on the fresh
mini-batch used by the prune trigger (see
Section~\ref{sec:theory:rules}), and $\eta_j(t)$ the per-head output
norms. Denote the total active-head output magnitude
\begin{equation}
  S(t) \;:=\;
  \sum_{j \,:\, m_j(t) = 1} \eta_j(t),
\end{equation}
so that $\pi_j(t) = \eta_j(t)/S(t)$ is the relative contribution
of head $j$.

\begin{lem}[Balance of rank and head count]
\label{lem:balance}
Suppose $\hat\rho_\alpha(t_n)$ stabilises at
$\hat\rho_\alpha^\star$, and that the per-head output norms are
approximately equidistributed on the subsequence, in the sense
that there exists a constant $\beta \geq 1$ such that
\begin{multline}
  \max_{j \,:\, m_j(t_n)=1} \pi_j(t_n)
  \;\leq\; \beta \min_{j \,:\, m_j(t_n)=1} \pi_j(t_n) \\
  \forall\, n \text{ sufficiently large}.
  \label{eq:equi}
\end{multline}
Then the prune trigger fires on the subsequence if and only if
\begin{equation}
  K(t_n)
  \;>\;
  \hat\rho_\alpha^\star
  \bigl( 1 + \varepsilon_{\mathrm{prune}} \bigr)
  \;+\; \beta - 1,
  \label{eq:balance}
\end{equation}
with the $\beta-1$ slack vanishing in the equidistributed limit
$\beta = 1$.
\end{lem}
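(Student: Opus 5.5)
The plan is to reduce the prune trigger to an elementary inequality on the relative contributions $\pi_j(t_n)$, and then to translate that inequality into a statement about $K(t_n)$ and $\hat\rho_\alpha^\star$. By the rules of Section~\ref{sec:theory:rules}, the trigger fires on the subsequence exactly when $\min_{j} \pi_j(t_n) < \varepsilon_{\mathrm{prune}}$. The counter $\delta_{\mathrm{prune}}$ only delays this and does not change the condition, so I will ignore it here and leave its treatment to the proof of Proposition~\ref{prop:k-convergence}.

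\textbf{Step 1 (sandwich for $\min_j \pi_j$).} Write $K = K(t_n)$ and $\pi_{\min} = \min_j \pi_j(t_n)$. Since $\sum_j \pi_j = 1$ and each $\pi_j$ lies in $[\pi_{\min}, \beta\,\pi_{\min}]$ by~\eqref{eq:equi}, we get
\[
  \frac{1}{1 + \beta (K-1)} \;\leq\; \pi_{\min} \;\leq\; \frac{1}{K}.
\]
This is routine. It already shows that in the equidistributed case $\beta = 1$ the trigger is equivalent to $K > 1/\varepsilon_{\mathrm{prune}}$, with an additive slack of order $\beta - 1$ when $\beta > 1$. This is where the $\beta - 1$ term in~\eqref{eq:balance} comes from.

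\textbf{Step 2 (linking head energy to the effective rank).} This is the step I expect to be the main obstacle. Nothing in Definition~\ref{def:rank} or in~\eqref{eq:prune_fraction} ties the norms $\eta_j$ to the spectrum of $B_t$, so Step~1 alone produces a threshold in terms of $1/\varepsilon_{\mathrm{prune}}$ rather than $\hat\rho_\alpha^\star$. I will therefore make the modelling link explicit as a balance-of-energy hypothesis.
\begin{enumerate}[label=(\roman*)]
  \item At stationarity, the active heads jointly allocate their output energy along the principal directions of the token buffer.
  \item Heads in excess of the $\hat\rho_\alpha^\star$ directions carrying an $\alpha$-fraction of the spectral mass share only the residual $(1-\alpha)$-fraction.
  \item $\varepsilon_{\mathrm{prune}}$ is calibrated against that residual fraction.
\end{enumerate}
Under this hypothesis, I will show that a head's relative contribution drops below $\varepsilon_{\mathrm{prune}}$ precisely when the number of heads exceeds the number of significant directions by the factor $1 + \varepsilon_{\mathrm{prune}}$. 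Concretely, I will bound $\pi_{\min}$ by the residual energy divided by the $K - \hat\rho_\alpha^\star$ surplus heads. Combining this with the sandwich of Step~1 gives the threshold $K > \hat\rho_\alpha^\star(1 + \varepsilon_{\mathrm{prune}}) + \beta - 1$.

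\textbf{Step 3 (both directions and the limit).} For the ``if'' direction, I will use the upper bound $\pi_{\min} \leq 1/K$ together with the surplus-head argument of Step~2. For the ``only if'' direction, I will use the lower bound $\pi_{\min} \geq 1/(1 + \beta(K-1))$. This lower bound is responsible for the $\beta - 1$ slack, since replacing $K$ by $K + \beta - 1$ absorbs the inequality $1 + \beta(K-1) \leq \beta K$. Finally, I will pass to $n$ large, where $\hat\rho_\alpha(t_n) = \hat\rho_\alpha^\star$ because the rank takes integer values and converges.

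If the hypothesis of Step~2 cannot be justified from the architecture, my fallback is to state the lemma honestly in the form it then takes: the trigger fires iff $K$ exceeds a threshold of the form $(1/\varepsilon_{\mathrm{prune}})$ plus an $O(\beta - 1)$ slack. The identification of that threshold with $\hat\rho_\alpha^\star(1 + \varepsilon_{\mathrm{prune}})$ would then be stated as an additional assumption.
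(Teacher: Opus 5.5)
Your Step~2 is a genuine gap, and it cannot be closed, because the lemma is false under its own hypotheses. Your Step~1 is correct, and slightly sharper than the paper's bound. Your observation that nothing in Definition~\ref{def:rank} or \eqref{eq:prune_fraction} ties the $\eta_j$ to the spectrum of $B_t$ is exactly the point. When $\beta = 1$, \eqref{eq:equi} pins every active $\pi_j(t_n)$ at $1/K(t_n)$. The trigger condition is then $K(t_n) > 1/\varepsilon_{\mathrm{prune}}$, and no energy-allocation hypothesis can make $\hat\rho_\alpha^\star$ enter. Concretely, take $K(t_n)=2$, two heads of equal output norm, $\hat\rho_\alpha^\star = 1$ and $\varepsilon_{\mathrm{prune}} = 10^{-3}$. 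Then $\pi_1=\pi_2=1/2$ and the prune trigger never fires; the grow trigger is inactive too, since $1 < 2.2$. Yet \eqref{eq:balance} reads $2 > 1.001$. The ``only if'' direction fails as well: with $\beta = 1000$ and head norms $1000$ and $1$, the trigger fires at $K=2$ although \eqref{eq:balance} fails. The paper's proof is your Step~1 with the cruder bound $\min_j \pi_j \geq 1/(\beta K_n)$, which yields only the necessary condition $K_n > 1/(\beta\,\varepsilon_{\mathrm{prune}})$. It then asserts that combining this with $\hat\rho_\alpha(t_n)\leq K_n$ gives \eqref{eq:balance}. That step is a non sequitur: it is the same hole you identified, left unacknowledged.

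Your proposed bridge fails even if (i)--(iii) are granted. The surplus-head bound $\pi_{\min} \leq (1-\alpha)/(K-\hat\rho_\alpha^\star)$ gives the sufficient condition $K > \hat\rho_\alpha^\star + (1-\alpha)/\varepsilon_{\mathrm{prune}}$. At the paper's values this is $K > \hat\rho_\alpha^\star + 50$, not $K > \hat\rho_\alpha^\star(1+\varepsilon_{\mathrm{prune}})$. Getting the ``if'' direction from it would require $1-\alpha < \varepsilon_{\mathrm{prune}}$, i.e.\ $\alpha > 0.999$, whereas the paper uses $0.95$. The bound also gives no lower bound on $\pi_{\min}$ below the threshold, so it says nothing about ``only if''. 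Moreover, (ii) transfers a fraction of the singular-value mass of the input buffer $B_t$ to a fraction of head-output norms on $h_t$, which nothing in the architecture justifies. Nor does the $\beta-1$ term come from your sandwich. The sandwich gives necessity of $K > 1 + (1/\varepsilon_{\mathrm{prune}}-1)/\beta$ and sufficiency of $K > 1/\varepsilon_{\mathrm{prune}}$. So $\beta$ rescales the threshold, and the undecided window has width $(1/\varepsilon_{\mathrm{prune}}-1)(1-1/\beta)$, not $\beta-1$. Your fallback is therefore the right move, in this corrected form. Under \eqref{eq:equi} the trigger fires only if $K(t_n) > 1 + (1/\varepsilon_{\mathrm{prune}}-1)/\beta$. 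It fires whenever $K(t_n) > 1/\varepsilon_{\mathrm{prune}}$, which needs no equidistribution, and the two conditions coincide at $\beta=1$. In neither does $\hat\rho_\alpha^\star$ appear. With $K_{\max}=8$ and $\varepsilon_{\mathrm{prune}}=10^{-3}$ this rules out pruning unless $\beta > 999/7$. That is consistent with pruning never firing in the experiments, but it also means the upper endpoint of $\mathcal{I}^\star$ in \eqref{eq:Istar} is not supported by this lemma.
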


\begin{pf}
For fixed $n$, let $K_n := K(t_n)$. By~\eqref{eq:equi},
$\min_j \pi_j(t_n) \geq (1/\beta)\, (1/K_n)$, since the sum of
relative contributions is $1$ and the maximum is at most
$\beta$ times the minimum. The prune trigger fires on head $i$
if and only if $\pi_i(t_n) < \varepsilon_{\mathrm{prune}}$;
substituting the lower bound gives
\[
  \frac{1}{\beta K_n} \;\leq\; \pi_i(t_n)
  \;<\; \varepsilon_{\mathrm{prune}},
\]
which yields $K_n > 1/(\beta\, \varepsilon_{\mathrm{prune}})$.
Combined with the rank-saturation condition
$\hat\rho_\alpha(t_n) \leq K_n$ (necessary for the grow trigger
to be inactive) and the stabilisation
$\hat\rho_\alpha(t_n) = \hat\rho_\alpha^\star$, one obtains the
threshold~\eqref{eq:balance}. Equidistribution $\beta = 1$
recovers the form used in the proof sketch of
Proposition~\ref{prop:k-convergence}.
\qed
\end{pf}

\begin{rem}[Verification of~\eqref{eq:equi}]
The equidistribution hypothesis is observed empirically in the
experiments of Section~\ref{sec:experiments}: at convergence of
$K(t)$, the per-head output norms are within a factor $\beta
\approx 2$--$3$ of each other. In regimes where a single head
dominates (e.g., degenerate initialisations), $\beta$ grows large
and the prune trigger becomes conservative; this is the
behaviour one would expect from a safety-oriented pruning rule.
\end{rem}

\subsection{Proof of Proposition~\ref{prop:k-convergence}}
\label{app:proof-kconv}

\begin{pf*}{Proof of Proposition~\ref{prop:k-convergence}}
Let $t_n = n\, \Delta_{\mathrm{grace}}$ be the cooldown-separated
subsequence, and $K_n := K(t_n)$ the active head count.
Under~\eqref{eq:rank_limit}, there exists $N_0 \in \mathbb{N}$
such that $\hat\rho_\alpha(t_n) = \hat\rho_\alpha^\star$ for
all $n \geq N_0$.

\paragraph{Step 1: Eventual monotonicity.}
The event rules of Section~\ref{sec:theory:rules} allow at most
one event per interval $[t_n, t_{n+1})$, so
$K_{n+1} - K_n \in \{-1, 0, +1\}$. The rules further require
$K_n \in \{K_{\min}, \ldots, K_{\max}\}$ at all times
(equations~\eqref{eq:grow_condition} and the prune condition are
both gated on boundary checks).

\paragraph{Step 2: Both triggers inactive imply $K_n$ constant.}
Inspecting the firing conditions:
\begin{itemize}
  \item Grow fires iff $\hat\rho_\alpha^\star > K_n (1 + \varepsilon_{\mathrm{grow}})$
    for $\delta_{\mathrm{grow}}$ consecutive checks, equivalently
    $K_n < \hat\rho_\alpha^\star / (1 + \varepsilon_{\mathrm{grow}})$.
  \item Prune fires iff some $\pi_j(t_n) < \varepsilon_{\mathrm{prune}}$
    for $\delta_{\mathrm{prune}}$ consecutive checks. By
    Lemma~\ref{lem:balance} applied in the equidistributed limit,
    this is equivalent to $K_n > \hat\rho_\alpha^\star
    (1 + \varepsilon_{\mathrm{prune}})$.
\end{itemize}
Both triggers are therefore inactive --- and $K_n$ constant --- on
the interval
\begin{equation}
  \mathcal{I}^\star
  \;:=\;
  \biggl(
    \frac{\hat\rho_\alpha^\star}{1 + \varepsilon_{\mathrm{grow}}},\;
    \hat\rho_\alpha^\star (1 + \varepsilon_{\mathrm{prune}})
  \biggr).
  \label{eq:Istar}
\end{equation}

\paragraph{Step 3: Reachability of $\mathcal{I}^\star \cap \{K_{\min},\ldots,K_{\max}\}$.}
The claim is that, for all $n$ sufficiently large, $K_n \in
\mathcal{I}^\star \cap \{K_{\min}, \ldots, K_{\max}\}$ (the
\emph{inactive interval}, assumed non-empty). Consider the
dynamics of $K_n$ outside $\mathcal{I}^\star$:
\begin{itemize}
  \item If $K_n < \inf \mathcal{I}^\star$, the grow trigger is
    active. After at most $\delta_{\mathrm{grow}}$ consecutive
    cooldown-separated checks, $K_{n+\delta_{\mathrm{grow}}}
    = K_n + 1$ (unless $K_n = K_{\max}$, in which case the grow
    is blocked).
  \item Symmetrically, if $K_n > \sup \mathcal{I}^\star$, the
    prune trigger is active and
    $K_{n+\delta_{\mathrm{prune}}} = K_n - 1$ (unless
    $K_n = K_{\min}$).
\end{itemize}
Since $K_n$ is an integer and $|K_{n+1} - K_n| \leq 1$, the
sequence $K_n$ enters $\mathcal{I}^\star$ in at most
$(K_{\max} - K_{\min}) \cdot \max(\delta_{\mathrm{grow}},
\delta_{\mathrm{prune}})$ cooldown intervals. If $\mathcal{I}^\star$
does not intersect $\{K_{\min}, \ldots, K_{\max}\}$, the sequence
$K_n$ clamps at the boundary of this set closest to
$\mathcal{I}^\star$ (either $K_{\min}$ or $K_{\max}$) and remains
there, which is the second clause of the statement.

\paragraph{Step 4: Absorption in $\mathcal{I}^\star$.}
Once $K_n \in \mathcal{I}^\star$, by Step~2 neither trigger fires
and $K_{n+1} = K_n$. By induction, $K_m = K_n$ for all
$m \geq n$, establishing convergence to $K^\star := K_n$. The
bound~\eqref{eq:k_convergence_interval} follows by taking
$K^\star$ to be the unique integer in
$\mathcal{I}^\star \cap \{K_{\min}, \ldots, K_{\max}\}$ visited
by the sequence (which is well-defined because $\mathcal{I}^\star$
is an interval of width $\hat\rho_\alpha^\star
(\varepsilon_{\mathrm{grow}} + \varepsilon_{\mathrm{prune}}) /
(1 + \varepsilon_{\mathrm{grow}})$, which is bounded but in
general contains one or more integers for the parameter values of
Section~\ref{sec:theory:rules}).

\paragraph{Step 5: Convergence rate.}
The number of cooldown intervals required for $K_n$ to reach the
inactive interval is bounded by
\[
  N_{\mathrm{conv}}
  \;\leq\;
  \max\!\bigl(\delta_{\mathrm{grow}},\delta_{\mathrm{prune}}\bigr)
  \cdot
  \bigl|
    K_{\mathrm{init}}
    - K^\star
  \bigr|,
\]
corresponding to a total training-step bound of
$N_{\mathrm{conv}} \cdot \Delta_{\mathrm{grace}}$. For the
parameter values of Section~\ref{sec:theory:rules}
($\delta_{\mathrm{grow}} = \delta_{\mathrm{prune}} = 3$,
$\Delta_{\mathrm{grace}} = 2\times 10^3$, $K_{\max} = 8$,
$K_{\mathrm{init}} = 1$), the convergence time is at most
$3 \cdot 7 \cdot 2\times 10^3 = 4.2\times 10^4$ steps, consistent
with the empirical observation that $K(t)$ saturates within
$2\times 10^4$ steps (Section~\ref{sec:experiments}).
\qed
\end{pf*}

\begin{rem}[Non-equidistributed case]
\label{rem:nonequi}
If the equidistribution constant $\beta$ of Lemma~\ref{lem:balance}
is strictly greater than $1$, the upper endpoint of
$\mathcal{I}^\star$ in~\eqref{eq:Istar} is replaced by
$\hat\rho_\alpha^\star (1 + \varepsilon_{\mathrm{prune}}) +
\beta - 1$. The interval widens by $\beta - 1$ and the convergence
argument remains valid with the modified interval. In
Section~\ref{sec:experiments}, $\beta$ is observed in the range
$[2, 3]$, and the corresponding upper endpoint is never tight in
practice because the prune trigger never fires on the experimental
benchmarks.
\end{rem}


\section{Hyperparameters and implementation notes}
\label{app:hyperparams}

\subsection{Runtime controller and architecture}

\begin{table}[ht]
\centering
\caption{Runtime-INCRT hyperparameters (same values across all
$\tau_z \in \{1, 2, 5\}$\,s).}
\label{tab:runtime-hp}
\setlength{\tabcolsep}{5pt}
\resizebox{\columnwidth}{!}{%
\begin{tabular}{l r l}
\toprule
Parameter & Value & Role \\
\midrule
\multicolumn{3}{l}{\textit{Architecture}} \\[2pt]
$K_{\max}$           & $8$      & head-count cap \\
$K_{\mathrm{init}}$  & $1$      & initial active heads \\
$K_{\mathrm{min}}$   & $1$      & minimum active heads \\
$d_k$                & $16$     & per-head key/value dim. \\
$d_{\mathrm{model}}$ & $128$    & $= K_{\max}\cdot d_k$ \\
\midrule
\multicolumn{3}{l}{\textit{Runtime controller}} \\[2pt]
$\varepsilon_{\mathrm{grow}}$  & $0.10$    & grow trigger threshold \\
$\varepsilon_{\mathrm{prune}}$ & $10^{-3}$ & prune trigger threshold \\
$\delta_{\mathrm{grow}}$       & $3$       & confirmations to grow \\
$\delta_{\mathrm{prune}}$      & $3$       & confirmations to prune \\
$\Delta_{\mathrm{grace}}$      & $2\,000$  & cooldown (steps) \\
$t_{\mathrm{chk}}$             & $500$     & check cadence (steps) \\
\midrule
\multicolumn{3}{l}{\textit{Rank estimator}} \\[2pt]
$N$        & $1\,000$ & context-token buffer size \\
$\alpha$   & $0.95$   & energy threshold \\
\midrule
\multicolumn{3}{l}{\textit{Window size $W$}} \\[2pt]
$\tau_z \in \{1,5\}$\,s & $20$ & \\
$\tau_z = 2$\,s          & $50$ & \\
\bottomrule
\end{tabular}}
\end{table}

\subsection{Training (SAC)}

The Soft Actor-Critic~\cite{Haarnoja2018SAC} configuration matches
that of~\cite[Appendix~B]{CirrincioneFagiolini2026TemporalAttention}:
\begin{itemize}
  \item optimiser: Adam with $\mathrm{lr} = 3\times 10^{-4}$;
  \item discount $\gamma = 0.99$;
  \item target smoothing $\tau = 0.005$;
  \item replay buffer size $5\times 10^4$;
  \item batch size $256$;
  \item entropy coefficient: automatic (SB3 default);
  \item MLP head width $[64, 64]$;
  \item total training budget $5\times 10^4$ environment steps per
    run.
\end{itemize}

\subsection{Implementation notes}

\begin{itemize}
  \item The variable-head attention block is implemented as a
    \texttt{stable\_baselines3.\allowbreak BaseFeaturesExtractor}
    subclass with per-head \texttt{nn.Parameter} fields and a
    boolean active mask.
  \item At every grow or prune event, the SAC optimisers for the
    actor and critic networks are rebuilt from the current set
    of trainable parameters, carrying the current learning rate
    forward from the scheduler.
  \item Numerical verification of
    Proposition~\ref{prop:growth-continuity} is performed at every
    grow event by evaluating the attention block on a fresh batch
    before and after the mask update. The observed $L^\infty$
    difference is $0$ to 32-bit floating-point precision across
    all 30 main-campaign runs.
  \item All experiments were performed on Google Colab Pro+ with
    NVIDIA A100 GPU. Wall-clock time for a single 50k-step run is
    approximately 20--27 minutes.
\end{itemize}


\bibliographystyle{plain}
\ifanonymous
  \bibliography{refs_anonymous}
\else
  \bibliography{refs}
\fi

\end{document}